\tikzset{arrowfill/.style={#1,general shadow={fill=black, shadow yshift=-0.8ex, path fading=arrowfading}}}
\tikzset{arrowstyle/.style n args={3}{draw=#2,arrowfill={#3}, single arrow,minimum height=#1, single arrow,
		single arrow head extend=.3cm,}}
\newcolumntype{L}{X}
\newcolumntype{R}{>{\raggedleft\arraybackslash}X}
\newcolumntype{C}{>{\centering\arraybackslash}X}
\definecolor{redsalsa}{HTML}{ED474A}
\definecolor{blueyonder}{HTML}{197BBD}
\newcommand{\qee}{\hfill$\triangle$}
\DeclarePairedDelimiter\abs{\lvert}{\rvert}
\newcommand{\olddtcontrol}{\texttt{dtControl 1.0}}
\newcommand{\dtcontrol}{\texttt{dtControl 2.0}}
\newcommand{\justdtcontrol}{\texttt{dtControl}}
\newcommand{\tool}{\justdtcontrol}
\newcommand{\uppaal}{\texttt{Uppaal Stratego}}
\newcommand{\scots}{\texttt{SCOTS}}
\newcommand{\storm}{\texttt{STORM}}
\newcommand{\prism}{\texttt{PRISM}}
\newcommand{\detfun}{\delta}
\newcommand{\image}{c\mathbb{D}}
\newcommand{\im}{\image(\delta(D))}
\newcommand{\imbar}{\image(\bar{\delta}(D))}
\newcommand{\mle}{\textsc{MLE}}
\newcommand{\dec}{\mathit{dec}}
\newcommand{\neu}{\mathit{neu}}
\newcommand{\acc}{\mathit{acc}}
\newcommand{\bigcell}[2]{\begin{tabular}{@{}#1@{}}#2\end{tabular}}
\newcommand{\ifarxivelse}[2]{\iftoggle{isArxiv}{#1}{#2}}
\begin{document}

	\titlerunning{dtControl 2.0}
	\title{dtControl 2.0: Explainable Strategy Representation via Decision Tree Learning Steered by Experts
		\thanks{%
			This work has been partially supported by the German Research Foundation (DFG) project No. 383882557 \emph{SUV} (KR 4890/2-1), No. 427755713 \emph{GOPro} (KR 4890/3-1) and the TUM International Graduate School of Science and Engineering (IGSSE) grant 10.06 \emph{PARSEC}. We thank Tim Quatman for implementing JSON-export of strategies in \storm\ and Pushpak Jagtap for his support with the \scots\ models.}
	}

	\renewcommand{\orcidID}[1]{\smash{\href{http://orcid.org/#1}{\protect\raisebox{-1.25pt}{\protect\includegraphics{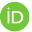}}}}}
	\author{Pranav Ashok\inst{1}\orcidID{0000-0002-1083-4741} \and Mathias Jackermeier\inst{1} \and Jan K\v{r}et\'{i}nsk\'{y}\inst{1}\orcidID{0000-0002-8122-2881} \and \\ Christoph Weinhuber\inst{1}(\Letter)\orcidID{0000-0002-1600-4933} \and Maximilian Weininger\inst{1}\orcidID{0000-0002-0163-2152} \and Mayank Yadav\inst{2}\orcidID{0000-0003-0302-8108}}
	\institute{
		Technical University of Munich, Munich, Germany \\ \email{firstname.lastname@tum.de} \and
		Department of Computer Science and Engineering, I.I.T. Delhi, \\
		New Delhi, India\\ \email{cs1180356@iitd.ac.in}
	}
	\newcommand{\shortauthors}{Ashok et al.}%
	\authorrunning{Ashok et al.}

	\maketitle
	
	\begin{abstract}
		
		Recent advances have shown how decision trees are apt data structures for concisely representing strategies (or controllers) satisfying various objectives. 
		Moreover, they also make the strategy more explainable.
		The recent tool \justdtcontrol\ had provided pipelines with tools supporting strategy synthesis for hybrid systems, such as \scots\ and \uppaal.
		We present \dtcontrol, a new version with several fundamentally novel features.
		Most importantly, the user can now provide domain knowledge to be exploited in the decision tree learning process and can also interactively steer the process based on the dynamically provided information.
		To this end, we also provide a graphical user interface. It allows for inspection and re-computation of parts of the result, suggesting as well as receiving advice on predicates, and visual simulation of the decision-making process.
		Besides, we interface model checkers of probabilistic systems, namely \storm\ and \prism\ and provide dedicated support for categorical enumeration-type state variables.
		Consequently, the controllers are more explainable and smaller.
		
		\keywords{Strategy representation \and Controller representation \and Decision Tree 
			\and Explainable Learning \and Hybrid systems \and Probabilistic Model Checking \and Markov Decision Process}
	\end{abstract}

	\section{Introduction}

A \emph{controller} (also known as strategy, policy or scheduler) of a system assigns to each state of the system a set of actions that should be taken in order to achieve a certain goal. For example, one may want to satisfy a given specification of a robot's behaviour or exhibit a concurrency bug appearing only in some interleaving. 
It is desirable that the controllers possess several additional properties, besides achieving the goal, in order to be usable in practice.
Firstly, controllers should be \emph{explainable}.
Only then can they be understood, trusted and implemented by the engineers, certified by the authorities, or used in the debugging process \cite{counterexample-learning}.
Secondly, they should be \emph{small} in size and efficient to run.
Only then they can be deployed on embedded devices with limited memory of a few kilobytes, while the automatically synthesized ones are orders of magnitude larger~\cite{DBLP:conf/adhs/ZapreevVM18}.
Thirdly, whenever the primary goal, e.g.\ functional correctness, is accompanied by a secondary criterion, e.g.\ energy efficiency, they should be \emph{performant} with respect to this criterion.

Automatic controller synthesis is able to provide controllers for a given goal in various domains, such as probabilistic systems \cite{prism,storm}, hybrid systems~\cite{scots,uppaal,pessoa,spaceex} or reactive systems \cite{strix}.
In some cases, even the performance can be reflected \cite{uppaal}.
However, despite recent interest in explainability in connection to AI-based controllers~\cite{XAI} and despite typically small memories of embedded devices, automatic techniques for controller synthesis mostly fall short of producing small explainable results.
A typical outcome is a controller in the form of a look-up table, listing the actions for each possible state, or a binary decision diagram (BDD)~\cite{bdd} representation thereof.
While the latter reduces the size to some extent, none of the two representations is explainable: the former due to its size, the latter due to the bit-level representation with all high-level structure lost.
Instead, learning representations in the form of decision trees (DT)~ \cite{mitchellML} has been recently explored to this end~\cite{sos-qest19,viktor-qest19}.
DTs turn out to be usually smaller than BDD but do not drown to the bit level {and} are generally well known for their interpretability and explainability due to their simple structure.
However, despite showing significant potential, the state-of-the-art tool dtControl~\cite{dtcontrol} uses predicates without natural interpretation, and moreover, the best size reductions are achieved using \emph{determinization}, i.e. making the controller less permissive, which negatively affects performance~\cite{sos-qest19}.

\begin{example}[Motivating example]\label{ex:motivating}
	Consider the cruise control model of \cite{larsen-cruise}, where we want to control the speed of our car so that it never crashes into the car in front while, as a secondary performance objective, keeping the distance between the two cars small.
	
	A safe controller for the this model as returned by \uppaal, is a lookup table of size 418 MB with 300,000 lines.
	The respective BDD has 1,448 nodes with all information bit-blasted. 
	Using adaptations of standard DT-construction algorithms, as implemented in \justdtcontrol, we can get a DT with 987 nodes, which is still too large to be explained. 
	Using determinization techniques, the controller can be compressed to 3 nodes! 
	However, then the DT allows only to decelerate until the minimum velocity. 
	This is safe, as we cannot crash into the car in front, but it does not even attempt at getting close to the front car, and thus has a very bad performance. 
	
	One can find a strategy with optimal performance, retaining the maximal permissiveness, not determinizing at all, %
	which can be represented by a DT with 11 nodes. 
	A picture of this DT as well as reasoning how to derive the predicates from the kinematic equations is in \ifarxivelse{Appendix~\ref{app:DK}}{the extended version of this paper~\cite[Appendix~A]{techreport}}.  
	
	However, exactly because the predicates are based on the \emph{domain knowledge}, namely the kinematic equations, they take the form of \emph{algebraic predicates} and not simply linear predicates, which are the only ones in \justdtcontrol\ and commonly in the machine-learning literature on DTs. \qee
\end{example}

This motivating example shows that using domain knowledge and algebraic predicates, available now in \dtcontrol, one can get smaller representation than when using existing heuristics.
Further, it improves the performance of the DT, and it is easily explainable, as it is based on domain knowledge.
In fact, the discussed controller is so explainable that it allowed us to find a bug in the original model.
In general, using \dtcontrol\ a domain expert can try to compress the controller, thus gain more insight and validate that it is correct. 
Another example of this has been reported from the use of \justdtcontrol\ in the manufacturing domain \cite{jonis-private-communication}.

While automatic synthesis of good predicates from the domain knowledge may seem as distant as automatic synthesis of program invariants or automatic theorem provers, we adopt the philosophy of those domains and offer \emph{semi-automatic techniques}.

Additionally, if not performance but only safety of a controller is relevant, we can still benefit from determinization without drawbacks. 
To this end, we also provide a new determinization procedure that generalizes the extremely successful MaxFreq technique of~\cite{dtcontrol} and is as good or better on all our examples.

To incorporate the changes just discussed, namely algebraic predicates, semi-automatic approach, and better determinization, we have also reworked the tool and its interfaces. To begin with, the software architecture of \dtcontrol\ is now very modular and allows for easy further modifications, as well as adding support for new synthesis tools.
In fact, we have already added parsers for the tools \storm~\cite{storm} and \prism~\cite{prism}, and thus we support probabilistic models as well. Since these models also contain categorical (or enumeration-type) variables, e.g. protocol states, we have also added support for categorical predicates. %
Furthermore, we added a graphical user interface that not only is easier to use than the command-line interface, but also allows to inspect the DT, modify and retrain parts of it, and simulate runs of the model under its control, further increasing the possibilities to explain the DT and validate the controller.

Summing up, the main improvements of \dtcontrol\ over the previous version \cite{dtcontrol} are the following:
\begin{itemize}
\item Support of algebraic predicates and categorical predicates
\item Semi-automatic interface and GUI with several interactive modes
\item New determinization procedure
\item Interfaces for model checkers PRISM and Storm and experimental evidence of improvements on probabilistic models compared to BDD
\end{itemize}

The paper is structured as follows.
After recalling necessary background in Section~\ref{sec:prelims}, we give an overview of the improvements over the previous version of the tool from the global perspective in Section~\ref{sec:tool}.
We detail on the algorithmic contribution in Sections \ref{sec:domain} (predicate domains), \ref{sec:selection} (predicate selection) and \ref{sec:determinize} (determinization).
Section \ref{sec:exp} provides experimental evaluation and Section~\ref{sec:conc} concludes.

\subsubsection{Related work.} 
DTs have been suggested for representing controllers of and counterexamples in probabilistic systems in \cite{counterexample-learning}, however, the authors only discuss approximate representations. The ideas have been extended to other setting, such as reactive synthesis \cite{viktor-reactivesynth} and hybrid systems \cite{sos-qest19}. More general linear predicates have been considered in leaves of the trees in \cite{viktor-qest19}. \dtcontrol\ contains the DT induction algorithms from \cite{sos-qest19,viktor-qest19}.
The differences to the previous version of the tool \justdtcontrol\ \cite{dtcontrol} are summarized above and schematically depicted in Figure~\ref{fig:tool}.

Besides, DTs have been used to represent and learn strategies for safety objectives in \cite{neider-fmcad19} and to learn program invariants in \cite{neider-dt-invariants}.
Further, DTs were used for representing the strategies during the model checking process, namely in strategy iteration ~\cite{DBLP:conf/ijcai/BoutilierDG95} or in simulation-based algorithms \cite{PnH}.
Representing controllers exactly using a structure similar to DT (mistakenly claimed to be an algebraic decision diagram) was first suggested by \cite{girard2013lowcomplexity}, however, no automatic construction algorithm was provided.

The idea of non-linear predicates has been explored in \cite{DBLP:conf/icml/IttnerS96}. In that work, however, it is not based on domain knowledge, but rather on projecting the state-space to higher dimensions.

BDDs~\cite{bdd} have been commonly used to represent strategies in planning \cite{DBLP:conf/aaai/CimattiRT98}, symbolic model checking \cite{prism} as well as to represent hybrid system controllers \cite{scots,pessoa}.
While BDD \cite{bdd} operate only on Boolean variables, they have the advantage of being diagrams and not trees.
Moreover, they correspond to Boolean functions that can be implemented on hardware easily.
\cite{DellaPenna2009} proposes an automatic compression technique for numerical controllers using BDDs.
Similar to our work, \cite{DBLP:conf/adhs/ZapreevVM18} considers the problem of obtaining concise BDD representation of controllers and presents a technique to obtain smaller BDDs via determinization.
However, BDDs are difficult to explain due to variables being bit-blasted and their size is very sensitive to the chosen variable ordering.
An extension of BDDs, algebraic or multi-terminal decision diagrams (ADD/MTBDD) \cite{ADD,MTBDD}, have been used in reinforcement learning for strategy synthesis~\cite{DBLP:conf/uai/HoeySHB99,DBLP:conf/nips/St-AubinHB00}. ADDs extend BDDs with the possibility to have multiple values in the terminal nodes, but the predicates still work only on boolean variables, retaining the disadvantages of BDDs.

%

\iffalse
Also look at Pranav's thesis RW.

\subsubsection{RW ideas that aggregated over the past few months}

CAV 5B 18:15 Controller Synthesis [Very good talk, lots of related work, might want to look at it for TACAS paper]

\url{http://proceedings.mlr.press/v80/verma18a/verma18a.pdf} (was mentioned in the keynote on NN at CAV, but had sth to do with DTs, I think (see Pranav Telegram); couldn't verify this when skimming paper)

Found some RW and described in DT-strat log 11.08. Contains Boutilier and Neider.

Shih, Choi \& Darwiche IJCAI 18 (heard about it at GandALF invited talk); explaining OBDDs algorithmically. Use it as motivation for our story "We want to explain, and we can't do automatically yet" (might want to check Darwiche's youtube channel and his newer work ("On the reasons behind decision"; sec 8.3 Computing Queries) and maybe find the Vardi longer tutorial); extend his work to DTs or use it for our purpose? Talk also mentioned robustness of NNs, maybe go into that when motivating explainability; also the discussion from CAV

PODS 2020 paper "Three modern roles for logic in AI", third part and UCLA automated reasoning group youtube channel

Think about how to handle this paper that does determinization for BDDs \url{https://reader.elsevier.com/reader/sd/pii/S2405896318311145?token=855EC1320A3293ACA69181A79171FED8A358BBC1CA55893F7C07C63E22603B6C4925D75A11B3EFEE923F4BD567D1642F}

\fi

%
	\section{Decision tree learning for controller representation}\label{sec:prelims}

In this section, we briefly describe how controllers can be represented as decision trees as in~\cite{dtcontrol}. %
We give an exemplified overview of the method, pinpointing the role of our algorithmic contributions.

A (non-deterministic, also called permissive) \emph{controller} is a map $C: S \mapsto 2^A$ from states to non-empty sets of actions.
This notion of a controller is fairly general; the only requirement is that it has to be memoryless and non-randomized. 
These kind of controllers are optimal for many tasks such as expected (discounted) reward, reachability or parity objectives.
Moreover, even finite-memory controllers can be written in this form by considering the product of the state space with the finite memory as the domain, for example, like in LTL model checking.

\emph{Decision trees} (DT), e.g. \cite{mitchellML}, are trees where every leaf node is labelled with a non-empty set of actions and every inner node is labelled with a \emph{predicate} $\rho: S \mapsto \{\mathit{true},\mathit{false}\}$.

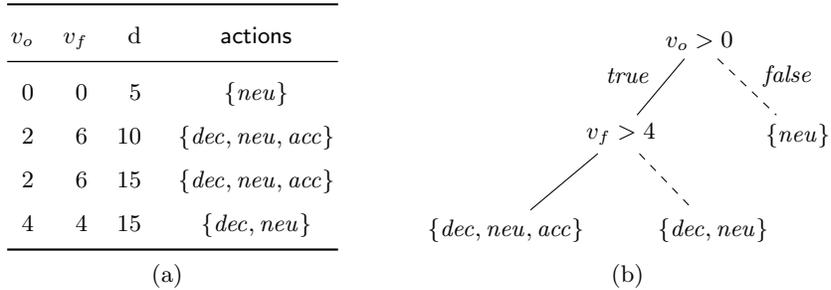
\begin{figure}
	\centering
	\newsavebox{\tempbox}
	\savebox{\tempbox}{
		{
		\renewcommand{\arraystretch}{1.5}
		\begin{tabular}[b]{r@{\hspace{1.2em}}r@{\hspace{1.2em}}r@{\hspace{1.5em}}c} \toprule
			$v_o$ & $v_f$ & d & \textsf{actions} \\ \midrule
			0 & 0 & 5 & $\{\neu\}$ \\
			2 & 6 & 10 & $\{\dec, \neu, \acc\}$ \\
			2 & 6 & 15 & $\{\dec, \neu, \acc\}$ \\
			4 & 4 & 15 & $\{\dec, \neu\}$ \\ \bottomrule
		\end{tabular}
		}
	}%
	\subfloat[]{\label{fig:ex-table}
		\usebox{\tempbox}
		
	}\qquad
	\subfloat[]{\label{fig:ex-tree}
		\begin{tikzpicture}[x=1cm,y=0.75cm]

			\node (A) at (0, 0) {$v_o > 0$};
			\node[below right=1 and 0.2 of A] (B) {$\{\neu\}$};
			\node[below left=1 and -0.1 of A] (C) {$v_f > 4$};
			\node[below right=1 and -0.2 of C] (D) {$\{\dec, \neu\}$};
			\node[below left=1 and -0.2 of C] (E) {$\{\dec, \neu, \acc\}$};
			
			\draw[-,dashed] (A) -- node[pos=0.3,xshift=20] {\small\emph{false}} (B);
			\draw[-] (A) -- node[pos=0.3,xshift=-16]  {\small\emph{true}} (C);
			\draw[-,dashed] (C) -- (D);
			\draw[-] (C) -- (E);

		\end{tikzpicture}
	}%
	\caption{An example controller based on the cruise-control model in the form of a lookup table (left), and the corresponding decision tree (right). 
	}
	\label{fig:example}
\end{figure}

\begin{example}[Decision tree representation]\label{ex:controller}
	As an example, consider the controller given in %
	 Figure~\ref{fig:ex-table}. It is a subset of the real cruise-control case study from the motivating Example \ref{ex:motivating}.
	A state is a 3-tuple of the variables $v_o$, $v_f$ and $d$, which denote the velocity of our car, the front car and the distance between the cars respectively.
	In each state, our car may be allowed to perform a subset of the following set of actions: decelerate ($\dec$), stay in neutral ($\neu$) or accelerate ($\acc$).
	A DT representing this lookup table is depicted in %
	 Figure~\ref{fig:ex-tree}.
	
	Given a state, for example $v_o = v_f = 4, d = 10$, the DT is evaluated as follows:
	We start at the root and, since it is an inner node, we evaluate its predicate $v_o > 0$. As this is true, we follow the true branch and reach the inner node labelled with the predicate $v_f > 4$. This is false, so we follow the false branch and reach the leaf node labelled $\{\dec,\neu\}$. Hence, we know that all three possibilities of decelerating, staying neutral and accelerating are allowed by the controller. \qee
\end{example}

To construct a DT representation of a given controller, the following recursive algorithm may be used. Note that it is heuristic since constructing an optimal binary decision tree is an NP-complete problem \cite{DBLP:journals/ipl/HyafilR76}.
\begin{itemize}
	\item[Base case:] 
	If all states in the the controller agree on their set of actions $B$ (i.e. for all states $s$ we have $C(s) = B$), return a leaf node with label $B$.
	 
	\item[Recursive case:] Otherwise, we split the controller. 
	For this, we select a predicate $\rho$ and construct an inner node with label $\rho$.
	Then we partition the controller by evaluating the predicate on the state space, and recursively construct one DT for the sub-controller on states $\{s \in S \mid \rho(s)\}$ where the predicate is true, and one for the sub-controller where it is false.
	These controllers are the children of the inner node with label $\rho$ and we proceed recursively. 
\end{itemize}

For selecting the predicate, we consider two hyper-parameters: 
The \emph{domain} of the predicates (see Section~\ref{sec:domain}) and the way to \emph{select} predicates (see Section~\ref{sec:selection}). The selection is typically performed by selecting the predicate with the lowest \emph{impurity}; this is a measure for how homogenous (or ``pure'') the controller is after the split, in other words the degree to which all the states agree on their actions.

We also consider a third hyper-parameter of the algorithm, namely \emph{determinization} by \emph{safe early stopping} (see Section \ref{sec:determinize}). This modifies the base case as follows: if all states in the controller agree on at least one action $a$ (i.e. for all states $s$ we have $a \in C(s)$), then we return a leaf node with label $\{a\}$. 
This variant of early stopping ensures that, even though the controller is not represented exactly, still for every state a safe action is allowed.

Hence, if the original controller satisfies some property, e.g. that a safe set of states is never left, the DT construction algorithm ensures that this property is retained.
This is because our algorithm represents the strategy exactly (or a safe subset, in case of determinization) and does not generalize as DTs typically do in machine learning.
DTs are suitable for both tasks, as both rely on the strength of DTs exploiting underlying structure.

\begin{remark} \label{rem:determinization-restriction}
	Note that for some types of objectives such as reachability, determinization of permissive strategies might lead to a violation of the original guarantees. 
	For example, consider a strategy that allows both a self-looping and a non-self-looping action at a particular state. 
	If the determinizer decides to restrict to the self-looping action, the reachability property may be violated in the determinized strategy. 
	However, this problem can be addressed when synthesizing the strategy by ensuring that every action makes progress towards the target.
\end{remark}

	\section{Tool}\label{sec:tool}

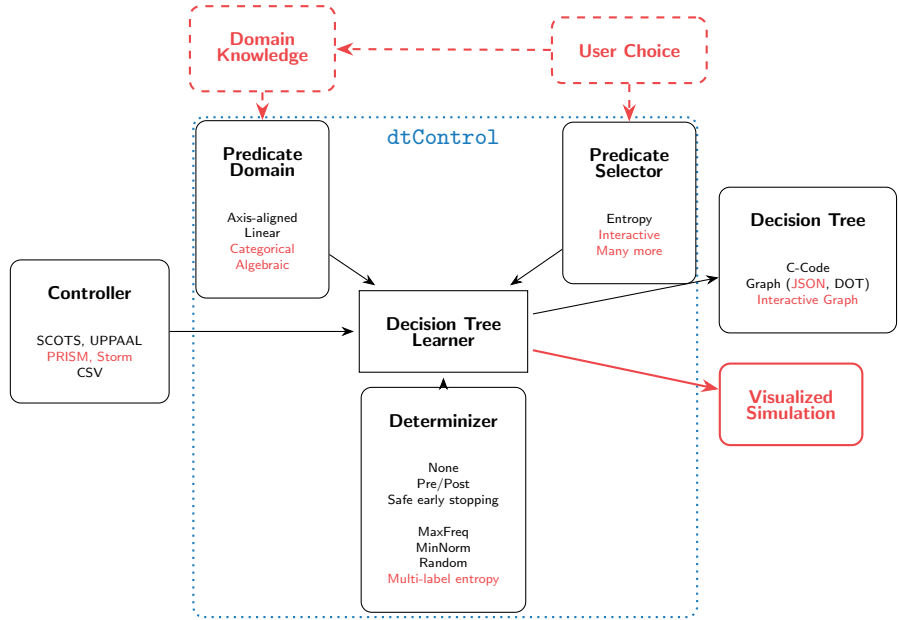
\begin{figure}[t]
	\centering
	\begin{tikzpicture}
		[scale=0.95,
		font=\sffamily\tiny,
		inner/.style={draw, rounded corners, inner sep=10},
		new/.style={inner,thick,redsalsa}
		]
		\draw[thick, rounded corners, dotted, color=blueyonder] (-3.5, 3) rectangle (3.55,-4);
		\node[above, draw=none, color=blueyonder] at (0,2.5) {\footnotesize\texttt{dtControl}};
		
		\node[inner,align=center,rounded corners=0,outer sep=2] (learning) {\scriptsize\textbf{Decision Tree}\\\scriptsize\textbf{Learner}};
		
		\node[inner,align=center] (preddom) at ($(learning.west)!0.57!(-3.5, 3)$) {\scriptsize\textbf{Predicate}\\\scriptsize\textbf{Domain}\\\vspace{1em}\\Axis-aligned\\Linear\\\textcolor{redsalsa}{Categorical}\\\textcolor{redsalsa}{Algebraic}};
		
		\node[inner,align=center] (contr) at (-4.95, 0) {\scriptsize\textbf{Controller}\\\vspace{1em}\\SCOTS, UPPAAL\\\textcolor{redsalsa}{PRISM, Storm}\\CSV};
		
		\node[inner,align=center] (predsel) at ($(learning.east)!0.6!(3.5, 3)$) {\scriptsize\textbf{Predicate}\\\scriptsize\textbf{Selector}\\\vspace{1em}\\Entropy\\\textcolor{redsalsa}{Interactive}\\\textcolor{redsalsa}{Many more}};
		
		\node[inner,align=center] (det) at ($(learning.south)!0.6!(0,-3.5)$) {\scriptsize\textbf{Determinizer}\\\vspace{1em}\\
			None\\Pre/Post\\Safe early stopping\\ \\MaxFreq\\MinNorm\\Random\\\textcolor{redsalsa}{Multi-label entropy}};
		
		\node[inner,align=center] (dtout) at (5.1, 1) {\scriptsize\textbf{Decision Tree}\\\vspace{1em}\\C-Code\\Graph (\textcolor{redsalsa}{JSON}, DOT)\\\textcolor{redsalsa}{Interactive Graph}};
		
		\node[new,align=center,below=1.5cm of dtout.south,anchor=south east,xshift=21] (viz) {\scriptsize\textbf{Visualized}\\\scriptsize\textbf{Simulation}};
		
		\node[new,align=center,dashed] (userchoice) at ($(predsel.north)+(0,1)$) {\scriptsize\textbf{User Choice}};
		
		\node[new,align=center,dashed] (domknow) at ($(preddom.north)+(0,1)$) {\scriptsize\textbf{Domain}\\\scriptsize\textbf{Knowledge}};
		
		\draw[-Stealth] (preddom) -- (learning);
		\draw[-Stealth] (predsel) -- (learning);
		\draw[-Stealth] (det) -- (learning);
		\draw[-Stealth] (contr) -- (learning);
		\draw[-Stealth] (learning) -- (dtout);
		\draw[-Stealth,new] (learning) -- (viz);
		\draw[-Stealth,new,dashed] (userchoice) -- (predsel);
		\draw[-Stealth,new,dashed] (userchoice) -- (domknow);
		\draw[-Stealth,new,dashed] (domknow) -- (preddom);
		
	\end{tikzpicture}
	\caption{An overview of the components of \dtcontrol, thereby showing software architecture and workflow. Contributions of this paper are highlighted in red.}
	\label{fig:tool}
\end{figure}

\dtcontrol\ is an easy-to-use open-source tool for representing memoryless symbolic controllers as more compact and more interpretable DTs, while retaining safety guarantees of the original controllers.
Our website \url{dtcontrol.model.in.tum.de} offers hyperlinks to the easy-to-install \texttt{pip} package\footnote{\texttt{pip} is a standard package-management system used to install and manage software packages written in Python.}, the documentation and the source code. 
Additionally, the artifact that has passed the TACAS 21 artifact evaluation is available here \cite{artifact}.

The schema in Figure \ref{fig:tool} illustrates the workflow of using \tool, highlighting new features in red.
Considering \tool\ as a black box, it shows that given a controller, it returns a DT representing the controller and also offers the possibility to simulate a run of the system under the control of the DT, visualizing the decisions made.
The controller can be input in various formats, including the newly supported strategy representations of the well-known probabilistic model checkers \prism~\cite{prism} and \storm~\cite{storm}.
The DT is output in several machine readable formats, and as C-code that can be directly used for executing the controller on embedded devices. Note that this C-code consists only of nested if-else-statements.
The new graphical user interface also offers the possibility to inspect the graph in an interactive web user interface, which even allows to edit the DT. This means that parts of the DT can be retrained with a different set of hyper-parameters and directly replaced. This way, one can for example first train a determinized DT and then retrain important parts of it to be more permissive and hence more performant for a secondary criterion. 
Figure \ref{fig:GUI} shows a screenshot of the newly integrated graphical user interface.

\begin{figure}[t]
	\centering
		\makebox[\textwidth]{\includegraphics[width=\textwidth]{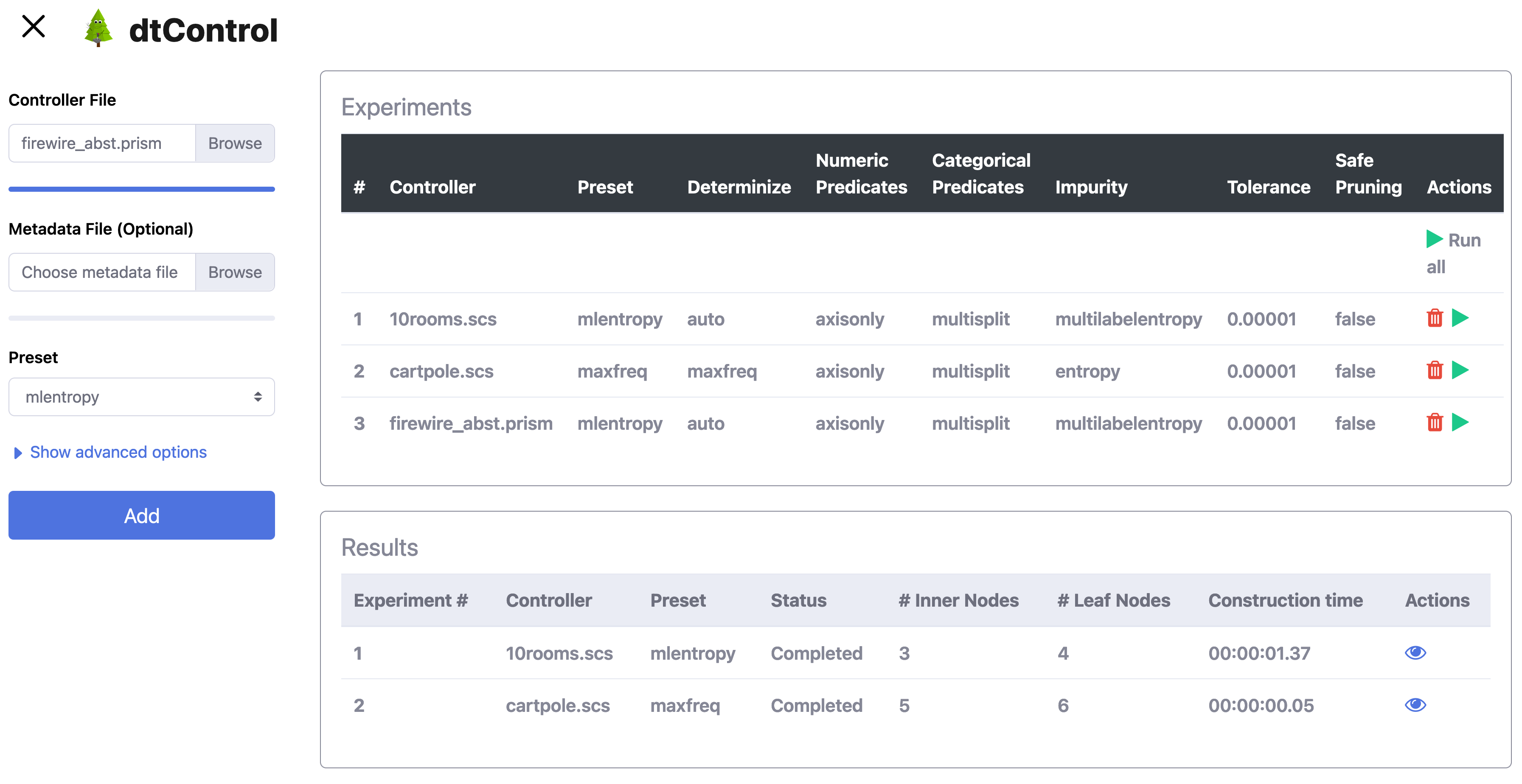}}
	\caption{Screenshot of the new web-based graphical user interface. It offers a sidebar for easy selection of the controller file and hyper-parameters, an experiments table where benchmarks can be queued, and a results table in which some statistics of the run are provided. Moreover, users can click on the `eye' icon in the results table to inspect the built decision tree.}
	\label{fig:GUI}
\end{figure}

Looking at the inner workings of \tool, we see the three important hyper-parameters that were already introduced in Section \ref{sec:prelims}: predicate domain, predicate selector, and determinizer.
For each of these, \tool\ offers various choices, some of which were newly added for version 2.0.
Most prominently, the user now has the possibility to directly influence both the predicate domain and the predicate selector, by providing domain knowledge and thus also additional predicates, or by directly using the interactive predicate selection.
More details on the predicate domain and how domain knowledge is specified can be found in Section \ref{sec:domain}. 
The different ways to select predicates, especially the new interactive mode, are the topic of Section \ref{sec:selection}. 
Our new insights into determinization are described in Section~\ref{sec:determinize}.
To support the user in finding a good set of hyper-parameters, \tool\ also offers extensive benchmarking functionality, allowing to specify multiple variants and reporting several statistics.

\subsubsection*{Technical notes.}
\dtcontrol\ is written in Python 3 following an architecture closely resembling the schema in Figure \ref{fig:tool}.
The modularity, along with our technical documentation, allows users to easily extend the tool.
For example, supporting another input format is only a matter of adding a parser.

\dtcontrol\ works with Python version 3.7.9 or higher. The core of the tool which runs the learning algorithms requires \texttt{numpy}~\cite{numpy}, \texttt{pandas}~\cite{pandas} and \texttt{scikit-learn}~\cite{scikit-learn} and optionally the library for the heuristic \texttt{OC1}~\cite{OC1}.
The algebraic predicates rely on \texttt{SymPy}~\cite{sympy} and \texttt{SciPy}~\cite{scipy}.
The web user interface is powered by Flask~\cite{flask} and D3.js~\cite{d3}.

	\section{Predicate domain}\label{sec:domain}

The domain of the predicates that we allow in the inner nodes of the DT is of key importance. As we saw in the motivating Example \ref{ex:motivating}, allowing for more expressive predicates can dramatically reduce the size of the DT.

We assume that our state space is structured, i.e. it is a Cartesian product of the domain of the variables ($S = S_1 \times \dots \times S_n$). 
We use $s_i$ to refer to the $i$-th state-variable of a state $s \in S$.
In Example \ref{ex:controller}, the three state-variables are the velocity of our car, the velocity of the front car, and the distance. 

We first give an overview of the predicate domains \dtcontrol\ supports, before discussing the details of the new ones.

\emph{Axis-aligned predicates}~\cite{mitchellML} have the form $s_i \leq c$, where $c$ is a rational constant. 
This is the easiest form of predicates, and they have the advantage that there are only finitely many, as the domain of every state-variable is bounded.
However, they are also least expressive.

\emph{Linear predicates} (also known as oblique~\cite{OC1}) have the form $\sum_i s_i \cdot a_i \leq c$, where $a_i$ are rational coefficients and $c$ is a rational constant. 
They have the advantage that they are able to combine several state-variables which can lead to saving linearly many splits, cf.~\cite[Fig. 5.2]{MJ}. 
The disadvantage of these predicates is that there are infinitely many choices of coefficients, which is why  heuristics were introduced to determine a good set of predicates to try out~\cite{OC1,dtcontrol}.
However, heuristically determined coefficients and combinations of variables can impede explainability.

\emph{Algebraic predicates} have the form $f(s) \leq c$, where $f$ is any mathematical function over the state-variables and $c$ is a rational constant. It can use elementary functions such as exponentiation, $\log$, or even trigonometric functions. Example \ref{ex:motivating} illustrated how this can reduce the size and improve explainability. More discussion of these predicates follows in Section~\ref{sec:alg}.

\emph{Categorical predicates} are special predicates for categorical (enumeration-type) state-variables such as colour or protocol state, and they are discussed in Section~\ref{sec:cat}.

\subsection{Categorical predicates}\label{sec:cat}

Categorical state-variables do not have a numeric domain, but instead are unordered and qualitative. 
They commonly occur in the models coming from the tools \prism\ and \storm.

\begin{example}
	Let one state-variable be `colour' with the domain $\{\text{red},\text{blue},\text{green}\}$.
	A simple approach is to assign numbers to every value, e.g. $\text{red}=0, \text{blue}=1, \text{green}=2$, and treat this variable as numeric.
	However, a resulting predicate such as $\text{colour} \leq 2$ is hardly explainable and additionally depends on the assignment of numbers. For example, it would not be possible to single out $\text{colour} \in \{\text{red},\text{green}\}$ using a single predicate, given the aforementioned numeric assignment. 
	Using linear predicates, for example adding half of the colour to some other state-variable, is even more confusing and dependent on the numeric assignment. \qee
\end{example}

\begin{figure}[bp]
	\centering
	\subfloat[]{
		\begin{tikzpicture}[x=1cm,y=1cm]

		\node (A) at (0, 0) {\textsf{color}};
		\node[below right=1 and 0.5 of A] (B) {$\{c\}$};
		\node[below left=1 and 0.5 of A] (C) {$\{a\}$};
		\node[below=1 of A] (D) {$\{b\}$};
		
		\draw[-] (A) -- node[pos=0.5,xshift=6] {b} (B);
		\draw[-] (A) -- node[pos=0.5,xshift=-6]  {r} (C);
		\draw[-] (A) -- node[pos=0.5,xshift=6]  {g} (D);			
		\end{tikzpicture}
	}\qquad\qquad
	\subfloat[]{
		\begin{tikzpicture}[x=1cm,y=1cm]

		\node (A) at (0, 0) {\textsf{color}};
		\node[below right=1 and 0.5 of A] (B) {$\{b\}$};
		\node[below left=1 and 0.5 of A] (C) {$\{a\}$};
		
		\draw[-] (A) -- node[pos=0.5,xshift=6] {b} (B);
		\draw[-] (A) -- node[pos=0.5,xshift=-10]  {r, g} (C);			
		\end{tikzpicture}
	}%
	\caption{Two examples of a categorical split. On the left, all possible values of the state-variable \textsf{colour} lead to a different child in a non-binary split. On the right, red and green lead to the same child, which is a result of grouping similar values together.}
	\label{fig:catEx}
\end{figure}
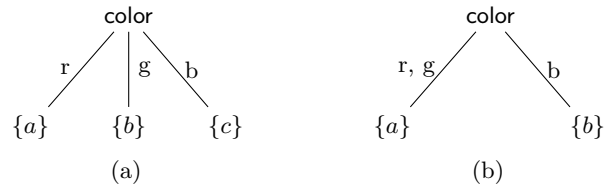

Instead of treating the categorical variables using their numeric encodings, \dtcontrol\ supports specialized algorithms from literature, see e.g.~\cite{Qui86,Qui93}.
They work by labelling an inner node with a categorical variable and performing a (possibly non-binary) split according to the value of the categorical variable. 
The node can have at most one child for every possible value of the categorical variable, but it can also group together similarly behaving values, see Figure \ref{fig:catEx} for an example. 
For the grouping, \dtcontrol\ uses the greedy algorithm from~\cite[Chapter 7]{Qui93} called attribute-value grouping.
It proceeds by first considering to have a branch for every single possible value of the categorical variable, and then merging branches as long as it improves the predicate; see \ifarxivelse{Appendix~\ref{app:avg-alg}}{\cite[Appendix~C]{techreport}} for the full pseudocode of the algorithm.

In our experiments we found that the grouping algorithm sometimes did not merge branches in cases where it would actually have made the DT smaller or more explainable. This is because the resulting impurity, the goodness of a predicate, could be marginally worse due to floating-point inaccuracies.
Thus, we introduce \emph{tolerance}, a bias parameter in favour of larger value groups. When checking whether to merge branches, we do not require the impurity to improve, but we allow it to become worse up to our tolerance.
Setting tolerance to 0 corresponds exactly to the algorithm from~\cite{Qui93}, while setting tolerance to $\infty$ results in merging branches until only two remain, thus producing binary predicates.

To allow \dtcontrol\ to use categorical predicates, the user has to provide a metadata file, which tells the tool which variables are categorical and which are numeric; see \ifarxivelse{Appendix~\ref{app:metadatafile}}{\cite[Appendix B.1]{techreport}} for an example.

\subsection{Algebraic predicates}\label{sec:alg}
It is impossible to try out every mathematical expression over the state-variables, and it would also not necessarily result in an explainable DT.
Instead, we allow the user to enter \emph{domain knowledge} to suggest templates of predicates that \dtcontrol\ should try. See \ifarxivelse{Appendix~\ref{app:DK-file}}{\cite[Appendix~B.2]{techreport}} for a discussion of the format in which domain knowledge can be entered.

Providing the basic equations that govern the model behaviour can already help in finding a good predicate, and is easy to do for a domain expert.
Additionally, \dtcontrol\ offers several possibilities to further exploit the provided domain knowledge:

Firstly, the given predicates need not be exact, but may contain coefficients. These coefficients can be both completely arbitrary or may come from a finite set suggested by the user. For coefficients with finite domain, \dtcontrol\ tries all possibilities; for arbitrary coefficients, it uses curve fitting to find a good value.
For example, the user can specify a predicate such as $d + (v_o - v_f) \cdot c_0 > c_1$ with $c_0$ being an arbitrary rational number and $c_1 \in \{0,5,10\}$.

Secondly, the interactive predicate selection (see Section ~\ref{sec:selection}) allows the user to try out various predicates at once and observe their respective impurity in the current node.
The user can then choose among them as well as iteratively suggest further predicates, inspired by those where the most promising results were observed.

Thirdly, the decisions given by a DT can be visualized in the simulator, possibly leading to better understanding the controller.
Upon gaining any further insight, the user can directly edit any subtree of the result, possibly utilizing the interactive predicate selection again.

\section{Predicate selection}\label{sec:selection}

The tool offers a range of options to affect the selection of the most appropriate predicate from a given domain.

\paragraph{Impurity measures:}
As mentioned in Section \ref{sec:prelims}, the predicate selection is typically based on the lowest \emph{impurity} induced. 
The most commonly used impurity measure (and the only one the first version of \tool\ supported) is Shannon's entropy~\cite{shannon}.
In \dtcontrol, a number of other impurity measures from the literature~\cite{Qui86,Breiman1984,Heath1993,OC1,viktor-qest19} are available. 
However, our results indicate that entropy typically performs the best, and therefore it is used as the default option unless the user specifies otherwise.
Due to lack of space, we delegate the details and experimental comparison between the impurity measures to \ifarxivelse{Appendix~\ref{app:impurities}}{\cite[Appendix~D]{techreport}}.

\paragraph{Priorities:}
\dtcontrol\ also has the new functionality to assign \emph{priorities} to the predicate generating algorithms.
Priorities are rational numbers between 0 and 1. 
The impurity of every predicate is divided by the priority of the algorithm that generated it.
For example, a user can use axis-aligned splits with priority 1 and a linear heuristic with priority $\nicefrac 1 2$. Then the more complicated linear predicate is only chosen if it is at least twice as good (in terms of impurity) as the easier-to-understand axis-aligned split. 
A predicate with priority 0 is only considered after all predicates with non-zero priority have failed to split the data.
This allows the user to give just a few predicates from domain knowledge, which are then strictly preferred to the automatically generated ones, but which need not suffice to construct a complete DT for the controller.

\paragraph{Interactive predicate selection:}
\dtcontrol\ offers the user the possibility to manually select the predicate in every split. 
This way, the user can prefer predicates that are explainable over those that optimize the impurity.

The screenshot of the interactive interface in~\ifarxivelse{Appendix \ref{app:interScreenshot}}{\cite[Appendix~F]{techreport}} shows the information that \dtcontrol\ provides. 
The user is given some statistics and metadata, e.g. minimum, maximum and step size of the state-variables in the current node, a few automatically generated predicates for reference and all predicates generated from domain knowledge. 
The user can specify new predicates and is immediately informed about their impurity. 
Upon selecting a predicate, the split is performed and the user continues in the next~node. 

The user can also first construct a DT using some automatic algorithm and then restart the construction from an arbitrary node using the interactive predicate selection to handcraft an optimized representation, or at any point decide that the rest of the DT should be constructed automatically.

\section{New insights about determinization}\label{sec:determinize}

In our context, \emph{determinization} denotes a procedure that, for some or all states, picks a subset of the allowed actions.
Formally, a determinization function $\detfun$ transforms a controller $C$ into a ``more determinized'' $C'$, such that for all states $s \in C$ we have $\emptyset \subsetneq C'(s) \subseteq C(s)$.
This reduces the permissiveness, but often also reduces the size.
Note that, for safety controllers, this always preserves the original guarantees of the controller. For other (non-safety) controllers, see Remark \ref{rem:determinization-restriction}.

\dtcontrol\ supports three different general approaches to determinizing a controller: pre-processing, post-processing and safe early stopping. 
Pre-processing commits to a single determinization before constructing the DT.
Post-processing prunes the DT after its construction, e.g. safe pruning in~\cite{sos-qest19}.
The basic idea of safe early stopping is already described in Section~\ref{sec:prelims}: if all states agree on at least one action, then instead of continuing to split the controller, stop early and return a leaf node with that common action.
Alternatively, to preserve more permissiveness, one can return not only a single common action, but all common actions; formally, return the maximum set $B$ such that for all states $s$ in the node $B \subseteq C(s)$.

The results of~\cite{dtcontrol} show that both pre-processing and post-processing are outperformed by an on-the-fly approach based on safe early stopping.
This is because pre-processing discards a lot of information that could have been useful in the DT construction and post-processing can only affect the bottom-most nodes of the resulting DT, but usually not those close to the root.

We now give a new view on safe early stopping approaches for determinizing a controller that allows us to generalize the techniques of~\cite{dtcontrol}, reducing the size of the resulting DTs even more.

\begin{example}
	Consider the following controller:
	$C(s_1) = \{a,b,c\}$, $C(s_2) = \{a,b,d\}$, $C(s_3) = \{x,y\}$.
	All three states map to different sets of actions, and thus an impurity measure like entropy penalizes grouping $s_1$ and $s_2$ the same as grouping $s_1$ and $s_3$.
	However, if determinization is allowed, grouping $s_1$ and $s_2$ need not be penalized at all, as these states agree on some actions, namely $a$ and $b$. 
	Grouping $s_1$ and $s_2$ into the same child node thus allows the algorithm to stop early at that point and return a leaf node with $\{a, b\}$, in contrast to grouping $s_1$ and $s_3$. \qee
\end{example}

Knowing that we want to determinize by safe early stopping affects the predicate selection process.
Intuitively, sets of states are more homogeneous the more actions they share.
We want to take this into account when calculating the impurity of predicates.
One way to do this would be to calculate the impurity of all possible determinization functions and pick the best one.
This, however, is infeasible, hence we propose the heuristic of \emph{multi-label impurity measures}. These impurity measures do not only consider the full set of allowed actions in their calculation, but instead they depend on the individual actions occurring in the set. This allows the DT construction to pick better predicates, namely those whose resulting children are more likely to be determinizable. In~\ifarxivelse{Appendix \ref{app:MLE}}{\cite[Appendix~E]{techreport}} we formally derive the multi-label variants of entropy and Gini-index. 

To conclude this section, we point out the key difference between the new approach of multi-label impurity measures and the previous idea that was introduced in~\cite{dtcontrol}.
The approach from~\cite{dtcontrol} does not evaluate the impurity of all possible determinization functions, but rather picks a smart one -- that of maximum frequency (MaxFreq) -- and evaluates according to that. 
MaxFreq determinizes in the following way: for every state, it selects from the allowed actions that action occurring most frequently throughout the whole controller. This way, many states share common actions.
This is already better than pre-processing, as it does not determinize the controller a priori, but rather considers a different determinization function at every node. However, in every node we calculate the impurity for several different predicates, and the optimal choice of determinization function depends on the predicate. Thus, choosing a single determinization function for a whole node is still too coarse, as it is fixed independent of the considered predicate. We illustrate the arising problem in the following Example~\ref{ex:mle}.

\begin{example}\label{ex:mle}
	\begin{figure}
		\centering
		\begin{tikzpicture}[x=0.9cm,y=0.9cm]
			\draw[-Stealth] (0,0) -- (4,0) node[anchor=north west] {\textsf{x}};
			\draw[-Stealth] (0,0) -- (0,4) node[anchor=south east] {\textsf{y}};
			\foreach \x in {0,1,2,3}
			\draw (\x cm,1pt) -- (\x cm,-1pt) node[anchor=north] {\x};
			\foreach \y in {0,1,2,3}
			\draw (1pt,\y cm) -- (-1pt,\y cm) node[anchor=east] {\y};

			\draw[-,blueyonder,very thick] (1.5,0) -- (1.5,4);
			\draw[-,redsalsa,very thick] (0,1.5) -- (3,1.5);	
			
			\node[circle,fill,inner sep=1pt] (x13) at (1,3) {} node[left] at (x13) {$\{a, c\}$};
			\node[circle,fill,inner sep=1pt] (x12) at (1,2) {} node[left] at (x12) {$\{a, c\}$};
			\node[circle,fill,inner sep=1pt] (x11) at (1, 1) {} node[below left] at (x11) {$\{a\}$};
			\node[circle,fill,inner sep=1pt] (x23) at (2, 3) {} node[right] at (x23) {$\{b, c\}$};
			\node[circle,fill,inner sep=1pt] (x22) at (2, 2) {} node[right] at (x22) {$\{b, c\}$};
			\node[circle,fill,inner sep=1pt] (x21) at (2, 1) {} node[below right] at (x21) {$\{b\}$};
			
		\end{tikzpicture}
		\caption{A simple example of a dataset that is split suboptimally by the MaxFreq approach from \cite{dtcontrol}, but optimally by the new multi-label entropy approach.}
		\label{fig:mle}
	\end{figure}
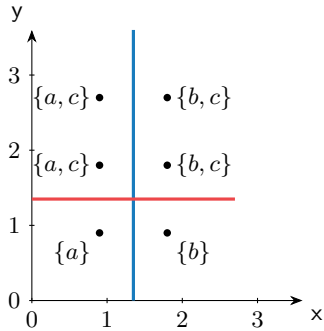
	
	Figure \ref{fig:mle} shows a simple controller with a two-dimensional state space. Every point is labeled with its set of allowed actions. 
	
	As $c$ is the most frequent action, MaxFreq determinizes the states $(1,2)$, $(1,3)$, $(2,2)$ and $(2,3)$ to action $c$. Hence the red split (predicate $y < 1.5$) is considered optimal, as it groups together all four states that map to $c$. The blue split (predicate $x<1.5$) is considered suboptimal, as then the data still looks very heterogeneous. 
	So, using MaxFreq, we need two splits for this controller; one to split of all the $c$'s and one to split the two remaining states.
	
	However, it is better to first choose a predicate and then determine a fitting determinization function. When calculating the impurity of the blue split, we can choose to determinize all states with $x=1$ to $\{a\}$ and all states with $x=2$ to $\{b\}$. Thus, in both resulting sub-controllers the impurity is 0 as all states agree on at least one action. This way, one split suffices to get a complete DT.
	Multi-label impurity measures notice when labels are shared between many (or all) states in a sub-controller, and thus they allow to prefer the optimal blue split.
	\qee
\end{example}

	\section{Experiments}\label{sec:exp}

\paragraph{Experimental setup.}
We compare three approaches: BDDs, the first version of \tool\ from \cite{dtcontrol} and \dtcontrol.
For BDDs\footnote{Our implementation of BDDs is based on the \texttt{dd} python library \url{https://github.com/tulip-control/dd}.} the variable ordering is important, so we report the smallest of 20 BDDs that we constructed by starting with a random initial variable ordering and reordering until convergence. To determinize BDDs, we used the pre-processing approach, 10 times with the minimum norm and 10 times with MaxFreq.
For the previous version of \tool, we picked the smaller of either a DT with only axis-aligned predicates or a DT with linear predicates using the logistic regression heuristic that was typically best in~\cite{dtcontrol}. Determinization uses safe early stopping with the MaxFreq approach.
For \dtcontrol, we use the multi-label entropy based determinization and utilize the categorical predicates for the case studies from probabilistic model checking.
We ran all experiments on a server with operating system Ubuntu 19.10, a 2.2GHz Intel(R) Xeon(R) CPU E5-2630 v4 and 250 GB RAM.

\paragraph{Comparing determinization techniques on cyber-physical systems.}
Table \ref{tab:det} shows the sizes of determinized BDDs and DTs on the permissive controllers of the tools \scots\ and \uppaal\ that were already used in~\cite{dtcontrol}. We see that the new determinization approach is strictly better than the previous one, with only two DTs being of equal size, as the result of the previous method was already optimal. With the exception of the case studies helicopter and truck\_trailer where BDDs are comparable or slightly better, both approaches using DTs are orders of magnitude smaller than BDDs or an explicit representation of the state-action mapping.

\begin{table}[t]
	\caption{Controller sizes of different determinized representations of the controllers from \scots\ and \uppaal. ``States'' is the number of states in the controller, ``BDD'' the number of nodes of the smallest BDD from 20 tries, \olddtcontrol\ \cite{dtcontrol} the smallest DT the previous version of \tool\ could generate and \dtcontrol\ the smallest DT the new version can construct. ``TO'' denotes a failure to produce a result in 3 hours. The smallest numbers in each row are highlighted.}
	\label{tab:det}
	\centering{
	\ifarxivelse{\renewcommand{\arraystretch}{1.4}}{\renewcommand{\arraystretch}{1.05}}
	\begin{tabular}{@{\hspace{1.2em}}l@{\hspace{1.2em}}r@{\hspace{1.2em}}r@{\hspace{1.2em}}r@{\hspace{1.2em}}r@{\hspace{1.2em}}} \toprule
		Case study          & States   & BDD  & \olddtcontrol       & \dtcontrol           \\
		\midrule
		cartpole       & 271      & 127  & 11         & \textbf{7}     \\
		10rooms        & 26,244    & 128  & \textbf{7} & \textbf{7}     \\
		helicopter     & 280,539   & 870 & 221        & \textbf{123}   \\
		cruise-latest  & 295,615   & 1,448 & \textbf{3} & \textbf{3}     \\
		dcdc           & 593,089   & 381  & 9          & \textbf{5}     \\
		truck\_trailer & 1,386,211  & \textbf{18,186} & 42,561      & 31,499 \\
		traffic\_30m   & 16,639,662 & TO & 127        & \textbf{97}   \\
		\bottomrule
	\end{tabular}
	}
\end{table}

\paragraph{Case studies from probabilistic model checking.}

\begin{table}[t]
	\caption{Controller sizes of different representations of controllers from the quantitative verification benchmark set~\cite{qcomp}, i.e. from the tools \storm\ and \prism. ``States'' is the number of states in the controller, ``BDD'' the number of nodes of the smallest BDD of 20 tries and \dtcontrol\ the smallest DT we could construct. The smallest numbers in each row are highlighted.}
	\label{tab:mdp}
	\centering
	{
	\ifarxivelse{\renewcommand{\arraystretch}{1.4}}{\renewcommand{\arraystretch}{1.1}}
	\begin{tabular}{@{\hspace{1.2em}}l@{\hspace{1.2em}}r@{\hspace{3.8em}}r@{\hspace{1.2em}}r@{\hspace{1.2em}}} \toprule
		Case study                             & States & BDD   & \dtcontrol   \\
		\midrule
		triangle-tireworld.9              & 48     & 51    & \textbf{23}     \\
		pacman.5                          & 232    & 330   & \textbf{33}     \\
		rectangle-tireworld.11            & \textbf{241}    & 498   & 373    \\
		philosophers-mdp.3                & 344    & 295   & \textbf{181}    \\
		firewire\_abst.3.rounds           & 610    & 61   & \textbf{25}     \\
		rabin.3                           & 704    & 303   & \textbf{27}     \\
		ij.10                             & 1,013  & \textbf{436}   & 753    \\
		zeroconf.1000.4.true.correct\_max & 1,068  & 386   & \textbf{63}     \\
		blocksworld.5                     & 1,124   & 3,985  & \textbf{855}    \\
		cdrive.10                         & \textbf{1,921}   & 5,134  & 2,401   \\
		consensus.2.disagree              & 2,064   & 138   & \textbf{67}     \\
		beb.3-4.LineSeized                & 4,173   & 913   & \textbf{59}     \\
		csma.2-4.some\_before             & 7,472   & 1,059  & \textbf{103}    \\
		eajs.2.100.5.ExpUtil              & 12,627  & 1,315  & \textbf{153}    \\
		elevators.a-11-9                  & 14,742  & \textbf{6,750}  & 9,883   \\
		exploding-blocksworld.5           & 76,741  & 34,447 & \textbf{1,777}   \\
		echoring.MaxOffline1              & 104,892 & 43,165  & \textbf{1,543}   \\
		wlan\_dl.0.80.deadline            & 189,641 & 5,738  & \textbf{2,563}   \\
		pnueli-zuck.5                     & 303,427 & \textbf{50,128}  & 150,341\\
		\bottomrule
	\end{tabular}
	}
\end{table}

For Table \ref{tab:mdp}, we used case studies from the quantitative verification benchmark set~\cite{qcomp}, which includes models from the \prism\ benchmark suite~\cite{prismBench}. Note that these case studies contain unordered enumeration-type state-variables for which we utilize the new categorical predicates.
To get the controllers, we solved the case study with \storm\ and exported the resulting controller. This export already eliminates unreachable states.
The previous version of \tool\ was not able to handle these case studies, so we only compare \dtcontrol\ to BDDs. 

Table \ref{tab:mdp} shows that also for case studies from probabilistic model checking, DTs are a good way of representing controllers. The DT is the smallest representation on 13 out of 19 case studies, often reducing the size by an order of magnitude compared to BDDs or the explicit representation. On 3 case studies, BDDs are smallest, and on 2 case studies, both the DT and the BDD fail to reduce the size compared to the explicit representation. This happens if there are many different actions and thus states cannot be grouped together. A worst case example of this is a model where every state has a different action; then, a DT would have as many leaf nodes as there are states, and hence twice as many nodes in total.

\begin{remark}
Note that the controllers exported by \storm\ are deterministic, so no determinization approach can be utilized in the DT construction. 
We conjecture that if a permissive strategy was exported, \dtcontrol\ would benefit from the additional information and be able to reduce the controller size further as for the cyber-physical systems. 
\end{remark}

	\section{Conclusion}\label{sec:conc}

	We have presented a radically new version of the tool \tool\ for representing controllers by decision trees.
	The tool now features a graphical user interface, allowing both experts and non-experts to conveniently interact with the decision tree learning process as well as the resulting tree.
	There is now a range of possibilities on how the user can provide additional information.
	The algebraic predicates provide the means to capture the (often non-linear) relationships from the domain knowledge.
	The categorical predicates together with the interface to probabilistic model checkers allow for efficient representation of strategies for Markov decision processes, too.
	Finally, the more efficient determinization yields very small (possibly non-performant) controllers, which are particularly useful for debugging the model.

	We see at least two major promising future directions.
	Firstly, synthesis of predicates could be made more automatic using mathematical reasoning on the domain knowledge, such as substituting expressions with a certain unit of measurement into other domain equations in the places with the same unit of measurement, e.g. to plug difference of two velocities into an equation for velocity.
	Secondly, one could transform the controllers into possibly entirely different controllers (not just less permissive) so that they still preserve optimality (or yield $\varepsilon$-optimality) but are smaller or simpler.
	Here, a closer interaction loop with the model checkers might lead to efficient heuristics.
	\clearpage

	\bibliographystyle{splncs04}
	\bibliography{ref}

\begin{thebibliography}{10}
\providecommand{\url}[1]{\texttt{#1}}
\providecommand{\urlprefix}{URL }
\providecommand{\doi}[1]{https://doi.org/#1}

\bibitem{flask}
Flask web development: developing web applications with python.
  \url{https://pypi.org/project/Flask/}, accessed: 14.10.2020

\bibitem{XAI}
Adadi, A., Berrada, M.: Peeking inside the black-box: {A} survey on explainable
  artificial intelligence {(XAI)}. {IEEE} Access  \textbf{6},  52138--52160
  (2018)

\bibitem{viktor-qest19}
Ashok, P., Br{\'{a}}zdil, T., Chatterjee, K., K\v{r}et\'{i}nsk\'{y}, J.,
  Lampert, C.H., Toman, V.: Strategy representation by decision trees with
  linear classifiers. In: {QEST}. Lecture Notes in Computer Science, vol.
  11785, pp. 109--128. Springer (2019)

\bibitem{dtcontrol}
Ashok, P., Jackermeier, M., Jagtap, P., K\v{r}et\'{i}nsk\'{y}, J., Weininger,
  M., Zamani, M.: dtcontrol: decision tree learning algorithms for controller
  representation. In: {HSCC}. pp. 17:1--17:7. {ACM} (2020)

\bibitem{artifact}
Ashok, P., Jackermeier, M., Křetínský, J., Weinhuber, C., Weininger, M.,
  Yadav, M.: {dtControl} 2.0: Explainable strategy representation via decision
  tree learning steered by experts ({TACAS} 21 artifact) (Jan 2021).
  \doi{10.5281/zenodo.4437169}

\bibitem{sos-qest19}
Ashok, P., K\v{r}et\'{i}nsk\'{y}, J., Larsen, K.G., Co{\"{e}}nt, A.L.,
  Taankvist, J.H., Weininger, M.: {SOS:} safe, optimal and small strategies for
  hybrid {M}arkov decision processes. In: {QEST}. Lecture Notes in Computer
  Science, vol. 11785, pp. 147--164. Springer (2019)

\bibitem{ADD}
Bahar, R.I., Frohm, E.A., Gaona, C.M., Hachtel, G.D., Macii, E., Pardo, A.,
  Somenzi, F.: Algebraic decision diagrams and their applications. Formal
  Methods Syst. Des.  \textbf{10}(2/3),  171--206 (1997)

\bibitem{d3}
Bostock, M., Ogievetsky, V., Heer, J.: D$^3$ data-driven documents. IEEE
  transactions on visualization and computer graphics  \textbf{17}(12),
  2301--2309 (2011)

\bibitem{DBLP:conf/ijcai/BoutilierDG95}
Boutilier, C., Dearden, R., Goldszmidt, M.: Exploiting structure in policy
  construction. In: {IJCAI}. pp. 1104--1113. Morgan Kaufmann (1995)

\bibitem{counterexample-learning}
Br{\'{a}}zdil, T., Chatterjee, K., Chmelik, M., Fellner, A.,
  K\v{r}et\'{i}nsk\'{y}, J.: Counterexample explanation by learning small
  strategies in {M}arkov decision processes. In: {CAV} {(1)}. Lecture Notes in
  Computer Science, vol.~9206, pp. 158--177. Springer (2015)

\bibitem{viktor-reactivesynth}
Br{\'{a}}zdil, T., Chatterjee, K., K\v{r}et\'{i}nsk\'{y}, J., Toman, V.:
  Strategy representation by decision trees in reactive synthesis. In: {TACAS}
  {(1)}. Lecture Notes in Computer Science, vol. 10805, pp. 385--407. Springer
  (2018)

\bibitem{Breiman1984}
Breiman, L., Friedman, J.H., Olshen, R.A., Stone, C.J.: Classification and
  Regression Trees. Wadsworth (1984)

\bibitem{bdd}
Bryant, R.E.: Graph-based algorithms for boolean function manipulation. {IEEE}
  Trans. Computers  \textbf{35}(8),  677--691 (1986)

\bibitem{DBLP:conf/aaai/CimattiRT98}
Cimatti, A., Roveri, M., Traverso, P.: Automatic obdd-based generation of
  universal plans in non-deterministic domains. In: {AAAI/IAAI}. pp. 875--881.
  {AAAI} Press / The {MIT} Press (1998)

\bibitem{uppaal}
David, A., Jensen, P.G., Larsen, K.G., Mikucionis, M., Taankvist, J.H.: Uppaal
  stratego. In: {TACAS}. Lecture Notes in Computer Science, vol.~9035, pp.
  206--211. Springer (2015)

\bibitem{storm}
Dehnert, C., Junges, S., Katoen, J., Volk, M.: A storm is coming: {A} modern
  probabilistic model checker. In: {CAV} {(2)}. Lecture Notes in Computer
  Science, vol. 10427, pp. 592--600. Springer (2017)

\bibitem{DellaPenna2009}
Della~Penna, G., Intrigila, B., Lauri, N., Magazzeni, D.: Fast and compact
  encoding of numerical controllers using obdds. In: Cetto, J.A., Ferrier,
  J.L., Filipe, J. (eds.) Informatics in Control, Automation and Robotics:
  Selcted Papers from the International Conference on Informatics in Control,
  Automation and Robotics 2008, pp. 75--87. Springer Berlin Heidelberg, Berlin,
  Heidelberg (2009)

\bibitem{spaceex}
Frehse, G., Guernic, C.L., Donz{\'{e}}, A., Cotton, S., Ray, R., Lebeltel, O.,
  Ripado, R., Girard, A., Dang, T., Maler, O.: Spaceex: Scalable verification
  of hybrid systems. In: {CAV}. Lecture Notes in Computer Science, vol.~6806,
  pp. 379--395. Springer (2011)

\bibitem{MTBDD}
Fujita, M., McGeer, P.C., Yang, J.C.: Multi-terminal binary decision diagrams:
  An efficient data structure for matrix representation. Formal Methods Syst.
  Des.  \textbf{10}(2/3),  149--169 (1997)

\bibitem{neider-dt-invariants}
Garg, P., Neider, D., Madhusudan, P., Roth, D.: Learning invariants using
  decision trees and implication counterexamples. In: {POPL}. pp. 499--512.
  {ACM} (2016)

\bibitem{girard2013lowcomplexity}
Girard, A.: Low-complexity quantized switching controllers using approximate
  bisimulation. CoRR  \textbf{abs/1209.4576} (2012)

\bibitem{Goodfellow2016}
Goodfellow, I., Bengio, Y., Courville, A.: Deep Learning. MIT Press (2016),
  \url{http://www.deeplearningbook.org}

\bibitem{numpy}
Harris, C.R., Millman, K.J., van~der Walt, S., Gommers, R., Virtanen, P.,
  Cournapeau, D., Wieser, E., Taylor, J., Berg, S., Smith, N.J., Kern, R.,
  Picus, M., Hoyer, S., van Kerkwijk, M.H., Brett, M., Haldane, A., del
  R{\'{\i}}o, J.F., Wiebe, M., Peterson, P., G{\'{e}}rard{-}Marchant, P.,
  Sheppard, K., Reddy, T., Weckesser, W., Abbasi, H., Gohlke, C., Oliphant,
  T.E.: Array programming with numpy. CoRR  \textbf{abs/2006.10256} (2020)

\bibitem{qcomp}
Hartmanns, A., Klauck, M., Parker, D., Quatmann, T., Ruijters, E.: The
  quantitative verification benchmark set. In: {TACAS} {(1)}. Lecture Notes in
  Computer Science, vol. 11427, pp. 344--350. Springer (2019)

\bibitem{Heath1993}
Heath, D.G., Kasif, S., Salzberg, S.: Induction of oblique decision trees. In:
  Proceedings of the 13th International Joint Conference on Artificial
  Intelligence. Chamb{\'{e}}ry, France, August 28 - September 3, 1993. pp.
  1002--1007 (1993)

\bibitem{DBLP:conf/uai/HoeySHB99}
Hoey, J., St{-}Aubin, R., Hu, A.J., Boutilier, C.: {SPUDD:} stochastic planning
  using decision diagrams. In: {UAI}. pp. 279--288. Morgan Kaufmann (1999)

\bibitem{DBLP:journals/ipl/HyafilR76}
Hyafil, L., Rivest, R.L.: Constructing optimal binary decision trees is
  {NP}-complete. Inf. Process. Lett.  \textbf{5}(1),  15--17 (1976)

\bibitem{DBLP:conf/icml/IttnerS96}
Ittner, A., Schlosser, M.: Non-linear decision trees - {NDT}. In: {ICML}. pp.
  252--257. Morgan Kaufmann (1996)

\bibitem{MJ}
Jackermeier, M.: {dtControl}: Decision Tree Learning for Explainable Controller
  Representation. Bachelor's thesis, Technische Universität München (2020)

\bibitem{pessoa}
Jr., M.M., Davitian, A., Tabuada, P.: {PESSOA:} {A} tool for embedded
  controller synthesis. In: {CAV}. Lecture Notes in Computer Science,
  vol.~6174, pp. 566--569. Springer (2010)

\bibitem{jonis-private-communication}
Kiesbye, J.: {Private Communication} (2020)

\bibitem{prism}
Kwiatkowska, M.Z., Norman, G., Parker, D.: {PRISM} 4.0: Verification of
  probabilistic real-time systems. In: {CAV}. Lecture Notes in Computer
  Science, vol.~6806, pp. 585--591. Springer (2011)

\bibitem{prismBench}
Kwiatkowska, M.Z., Norman, G., Parker, D.: The {PRISM} benchmark suite. In:
  {QEST}. pp. 203--204. {IEEE} Computer Society (2012)

\bibitem{larsen-cruise}
Larsen, K.G., Mikucionis, M., Taankvist, J.H.: Safe and optimal adaptive cruise
  control. In: Correct System Design. Lecture Notes in Computer Science,
  vol.~9360, pp. 260--277. Springer (2015)

\bibitem{strix}
Luttenberger, M., Meyer, P.J., Sickert, S.: Practical synthesis of reactive
  systems from {LTL} specifications via parity games. Acta Informatica
  \textbf{57}(1-2),  3--36 (2020)

\bibitem{pandas}
{W}es {M}c{K}inney: {D}ata {S}tructures for {S}tatistical {C}omputing in
  {P}ython. In: {S}t\'efan van~der {W}alt, {J}arrod {M}illman (eds.)
  {P}roceedings of the 9th {P}ython in {S}cience {C}onference. pp. 56 -- 61
  (2010). \doi{10.25080/Majora-92bf1922-00a}

\bibitem{sympy}
Meurer, A., Smith, C.P., Paprocki, M., Cert{\'{\i}}k, O., Kirpichev, S.B.,
  Rocklin, M., Kumar, A., Ivanov, S., Moore, J.K., Singh, S., Rathnayake, T.,
  Vig, S., Granger, B.E., Muller, R.P., Bonazzi, F., Gupta, H., Vats, S.,
  Johansson, F., Pedregosa, F., Curry, M.J., Terrel, A.R., Roucka, S., Saboo,
  A., Fernando, I., Kulal, S., Cimrman, R., Scopatz, A.M.: Sympy: symbolic
  computing in python. PeerJ Comput. Sci.  \textbf{3}, ~e103 (2017)

\bibitem{mitchellML}
Mitchell, T.M.: Machine learning. McGraw Hill series in computer science,
  McGraw-Hill (1997)

\bibitem{OC1}
Murthy, S.K., Kasif, S., Salzberg, S., Beigel, R.: {OC1:} {A} randomized
  induction of oblique decision trees. In: {AAAI}. pp. 322--327. {AAAI} Press /
  The {MIT} Press (1993)

\bibitem{neider-fmcad19}
Neider, D., Markgraf, O.: Learning-based synthesis of safety controllers. In:
  {FMCAD}. pp. 120--128. {IEEE} (2019)

\bibitem{scikit-learn}
Pedregosa, F., Varoquaux, G., Gramfort, A., Michel, V., Thirion, B., Grisel,
  O., Blondel, M., Prettenhofer, P., Weiss, R., Dubourg, V., Vanderplas, J.,
  Passos, A., Cournapeau, D., Brucher, M., Perrot, M., Duchesnay, E.:
  Scikit-learn: Machine learning in {P}ython. Journal of Machine Learning
  Research  \textbf{12},  2825--2830 (2011)

\bibitem{PnH}
Pyeatt, L.D., Howe, A.E., et~al.: Decision tree function approximation in
  reinforcement learning. In: Proceedings of the third international symposium
  on adaptive systems: evolutionary computation and probabilistic graphical
  models. vol.~2, pp. 70--77. Cuba (2001)

\bibitem{Qui86}
Quinlan, J.R.: Induction of decision trees. Mach. Learn.  \textbf{1}(1),
  81--106 (1986)

\bibitem{Qui93}
Quinlan, J.R.: {C4.5:} Programs for Machine Learning. Morgan Kaufmann (1993)

\bibitem{scots}
Rungger, M., Zamani, M.: {SCOTS:} {A} tool for the synthesis of symbolic
  controllers. In: {HSCC}. pp. 99--104. {ACM} (2016)

\bibitem{shannon}
Shannon, C.E.: A mathematical theory of communication. Bell Syst. Tech. J.
  \textbf{27}(4),  623--656 (1948)

\bibitem{DBLP:conf/nips/St-AubinHB00}
St{-}Aubin, R., Hoey, J., Boutilier, C.: {APRICODD:} approximate policy
  construction using decision diagrams. In: {NIPS}. pp. 1089--1095. {MIT} Press
  (2000)

\bibitem{scipy}
Virtanen, P., Gommers, R., Oliphant, T.E., Haberland, M., Reddy, T.,
  Cournapeau, D., Burovski, E., Peterson, P., Weckesser, W., Bright, J.,
  van~der Walt, S., Brett, M., Wilson, J., Millman, K.J., Mayorov, N., Nelson,
  A.R.J., Jones, E., Kern, R., Larson, E., Carey, C.J., Polat, I., Feng, Y.,
  Moore, E.W., VanderPlas, J., Laxalde, D., Perktold, J., Cimrman, R.,
  Henriksen, I., Quintero, E.A., Harris, C.R., Archibald, A.M., Ribeiro, A.H.,
  Pedregosa, F., van Mulbregt, P., SciPy: Scipy 1.0-fundamental algorithms for
  scientific computing in python. CoRR  \textbf{abs/1907.10121} (2019)

\bibitem{DBLP:conf/adhs/ZapreevVM18}
Zapreev, I.S., Verdier, C., Jr., M.M.: Optimal symbolic controllers
  determinization for {BDD} storage. In: {ADHS} 2018. IFAC-PapersOnLine,
  vol.~51, pp.~1--6. Elsevier (2018). \doi{10.1016/j.ifacol.2018.08.001}

\end{thebibliography}
	
	\ifarxivelse{
		\clearpage
		\appendix
		\section{Deriving algebraic predicates from domain knowledge for the cruise-control model}\label{app:DK}

The cruise-control model is governed by the kinematic equations, i.e. the new distance between the cars is computed as follows:
\[ d_{\textit{new}} = d + (v_f-v_o) \cdot t + 0.5 \cdot (a_f-a_o) \cdot t^2, \]
where $t$ is a time span in seconds, $d$, $v_f$ and $v_o$ are distance between the cars, velocity of the front car and velocity of our car as before, and $a_f$ and $a_o$ are the acceleration that the cars use during the whole time span. The model restricts these accelerations to be from the set $\{-2,0,2\}$, which corresponds to the actions deceleration, neutral or acceleration.

We want that the distance between the cars always is greater than some threshold. 
The worst case behaviour of the front car is to always decelerate, corresponding to emergency braking.
We can thus use $a_f = -2$ and the only unknown in the equations is our acceleration $a_o$. 
Now we can calculate the worst-case distance between the cars, assuming we accelerate for one time step (i.e. $a_o=2$) and then brake ($a_o=-2$) until both cars are at minimum velocity. If that distance is greater than our threshold, we know that it is safe to accelerate in the next time step.
Since the actions are ordered, in any state in which it is safe to accelerate, we can also stay neutral or decelerate (excluding the corner case of minimum velocity).
Similarly, we can split of the states that allow to stay neutral or decelerate, but not to accelerate. All remaining states only allow deceleration.

See Figure \ref{fig:SmartCruise} for the DT using algebraic predicates to represent the whole permissive controller with 11 nodes. $t_f$ is the time until the front car reaches minimum velocity. The root node checks whether acceleration is safe in the next time step, the left child of the root checks whether staying neutral is safe.
-6 and 14 are minimum respectively maximum velocity, and thus have to be treated separately after these two most important splits.

\begin{figure}
	\centering
	\fbox{\includegraphics[width=\textwidth]{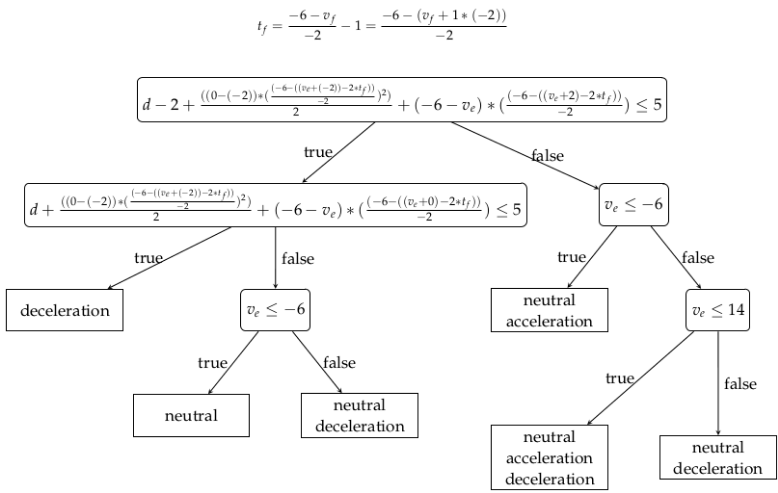}}
	\caption{DT using algebraic predicates to represent the whole permissive controller for the cruise-control model with 11 nodes. }
	\label{fig:SmartCruise}
\end{figure}

\section{Domain knowledge and metadata formats}
\subsection{Metadata file}\label{app:metadatafile}
An example of a metadata file that is used to inform \dtcontrol, which variables are numeric and which are categorical. Giving the column names as well improves the graphical output, as the variables are not indexed (e.g. \texttt{x\_0}), but rather named (e.g. \texttt{Host\_1\_ev}).

\begin{lstlisting}[basicstyle=\ttfamily,columns=flexible,tabsize=4]
{
	"x_column_types": 
	{
		"numeric": [ 0, 1, 2, 3, 4, 5, 6, 7, 8, 9 ],
		"categorical": [ 10, 11, 12, 13, 14, 15, 16	]
	},
	"x_column_names": [
		"Host_1_ev",
		"Host_1_na",
		"Host_1_wt",
		"Host_2_ev",
		"Host_2_na",
		"Host_2_wt",
		"Host_ev",
		"Host_na",
		"Host_wt",
		"N",
		"_loc_Clock",
		"_loc_Host",
		"_loc_Host_1",
		"_loc_Host_2",
		"cr",
		"gave_up",
		"line_seized"
	]
}
\end{lstlisting}

\subsection{Domain knowledge format}\label{app:DK-file}
All domain knowledge must take the form of predicates. Their structure can be summarized as follows:
\[\textit{term} \sim \textit{term}; \textit{def}\]
$\textit{term}$ is an arbitrary arithmetic term, using any elementary function that can be parsed by SymPy, any state-variable and coefficients that are defined in $\textit{def}$; the exact format of the coefficient definition $\textit{def}$ is provided in our documentation, but most importantly, it allows you to specify finite sets of values or a completely arbitrary coefficient.
$\sim$ is a standard comparator from the set \{\texttt{<=,>=,<,>,=}\}.
An example of a predicate is 

\begin{center}
\texttt{c\_1 * x\_1 - c\_2 + 2 * x\_2 <= c\_3; c\_1 in \{1,2,3\}; c\_2 in \{4,8\}}
\end{center}

\noindent Here, $c_1$ and $c_2$ are from a finite set and $c_3$ is completely arbitrary.

\section{Algorithm for categorical predicates}\label{app:avg-alg}
Let $v$ be a categorical state-variable with possible values $v_1, \dots, v_m$.
It is not feasible to simply try all different possible attribute value groupings for every categorical state-variable $v$, since the number of such groupings is exponential in the number of possible values of $v$~\cite[Ch.~7]{Qui93}. We instead use a greedy algorithm based on iterative merging of value groups suggested by~\cite[Ch.~7]{Qui93}, which proceeds as follows:
\begin{enumerate}
	\item Initially, create a single group for every possible value, i.e. set the inital grouping to $G = (\{v_1\},\dots,\{v_m\})$. Let $G_i$ denote the $i$-th set in the grouping.
	\item If only two value groups remain, return those as the optimal grouping.
	\item For every pair of value groups $(G_i, G_j)$, compute the impurity of the new value grouping in which $G_i$ and $G_j$ are merged.
	\item If the impurity has not decreased in any of the new groupings, return the original grouping. Otherwise, proceed to Step 2 with the best new grouping.
\end{enumerate}
Our modification of tolerance that is discussed in Section~\ref{sec:cat} only modifies the stopping condition in Step 4: it replaces ``has not decreased in any of the new groupings'' with ``has increased by more than the tolerance in all new groupings''.

\section{Other impurities}\label{app:impurities}
In the following, we give a description of all impurity measures that are supported by \dtcontrol, and then evaluate them on some models from probabilistic model checking as well as some cyber-physical systems. The text is almost verbatim from the Bachelor's thesis~\cite{MJ}.
\subsection{Description}\label{app:impDescr}
\subsubsection{Entropy}
A particularly well-known impurity measure is based on the concept of \textit{entropy} from information theory, as introduced in the seminal work of Shannon~\cite{shannon}. Information theory is concerned with quantifying the amount of information the occurrence of a random event yields. If an event $x$ occurs with probability $p_x$, its information content is defined to be
\[I(x) = -\log_2(p_x)\;\text{bits}. \]

This definition has several desirable properties~\cite[Ch.~3]{Goodfellow2016}: first, we see that the information content of an event is inversely proportional to its probability. For instance, an event with probability $1$ always occurs, and thus does not convey any information. Second, if two events are independent, the information content of both events occurring is the sum of the individual information contents, since
\[I(x,y) = -\log_2(p_{x,y}) = -\log_2(p_xp_y) = -\log_2(p_x) - \log_2(p_y).   \]

In the context of impurity measures, the events that we are interested in are that a randomly picked data point from a sub-controller $C\subseteq S\times 2^A$ allows the action set $B\in 2^A$. Let $n_B$ be the frequency of the set $B$ in $C$, i.e.\ $n_B = |\{ s\in S : C(x)=B \}|$. Then, such an event occurs with probability
\[ \frac{n_B}{|C|} \]
and has an information content of
\[-\log_2\left(\frac{n_B}{|C|}\right)\;\text{bits}. \]

The crucial insight that allows for the development of an impurity measure is the following: if the expected amount of information content of these events is low, we already have a lot of knowledge about the labels in the sub-controller. Thus, classifying this dataset probably requires less effort than classifying a dataset where the expected amount of information content from these events is high. The expected amount of information content is also known as the \textit{entropy} of the controller $C$ and is defined as
\[ H(C) = -\sum_{B\in 2^A} \frac{n_B}{|C|} \log_2\left(\frac{n_B}{|C|}\right). \]

To illustrate two extreme cases, consider a dataset where every data point has a different label. This is extremely hard to classify since every data point has to be separated from all other points, and, correspondingly, the entropy of such a dataset is maximal. In contrast, the entropy of a pure dataset is always 0.

We now have a way to measure the difficulty of classifying a sub-controller. The entropy impurity measure then simply averages the difficulty of classifying the partitions $C_1,\ldots,C_m$ created by a predicate $\rho$. It is thus defined as follows:
\[ \mathrm{ent}(\rho, C) = \sum_{i\in [m]} \frac{ |C_i| }{|C|} H(C_i).\]

Instead of entropy, a similar measure called \textit{information gain} is sometimes used. The only real difference between the two is that information gain is a measure of \textit{goodness}, i.e.\ we want to maximize the information gain in DT learning. This is however equivalent to minimizing entropy~\cite{Qui86}. Entropy and information gain are some of the most common ways to determine the quality of predicates and have received a great deal of attention in the DT literature~\cite{Breiman1984, Qui86, Qui93}.

\subsubsection{Entropy ratio}
\label{sec:ratio}
An issue with entropy that is sometimes encountered with categorical features is that it favors multi-comparison predicates with a large number of branches~\cite{Qui86}. Quinlan~\cite{Qui86} thus introduces a normalization of the information gain criterion called the \textit{gain ratio}. Since this is again a goodness measure, we modified it into an impurity measure named the \textit{entropy ratio}.

We first introduce the quantity of the intrinsic information content of a split
\[ \mathrm{split\;info}(\rho, C) = -\sum_{i\in [m]} \frac{ |C_i| }{|C|} \log_2\left(\frac{ |C_i| }{|C|}\right).\]
This measures the expected information content of the event that a randomly selected data point in $C$ will be assigned the $i^{th}$ branch, which corresponds to the information generated by the partitioning itself. Naturally, the more branches the predicate $\rho$ creates, the higher this information content will be. In contrast, the entropy measures the amount of information \textit{relevant to classification} from the same partitioning~\cite[Ch.~2]{Qui93}.

The entropy ratio then simply normalizes the entropy with the intrinsic information content of a split, i.e.
\[ \mathrm{ent\;ratio}(\rho, C) = \frac{\mathrm{ent}(\rho,C)}{\mathrm{split\;info}(\rho, C)}. \]

It has to be noted that one of the primary reasons for using the entropy ratio instead of just the entropy is to prevent overfitting~\cite[Ch.~2]{Qui93}. In our setting, overfitting is desirable and there is hence less justification for the entropy ratio.

\subsubsection{Gini index}

Another common impurity measure used in e.g.\ the \texttt{CART} system is the \textit{Gini index}~\cite[Ch.~4]{Breiman1984}. It measures the probability of a data point being misclassified if we were to assign labels randomly based on the label distribution in the sub-controller. This probability is given by
\[ G(C) = \sum_{\substack{y,z \in 2^A\\y\neq z}} \frac{n_y}{|C|} \frac{n_z}{|C|}\]
and can equally be written as
\[ G(C) = \left(\sum_{y\in 2^A} \frac{n_y}{|C|}\right)^2 - \sum_{y\in 2^A} \left(\frac{n_y}{|C|}\right)^2 = 1 - \sum_{y\in 2^A} \left(\frac{n_y}{|C|}\right)^2.  \]

Similarly to entropy, the Gini index is then a weighted average of these values:
\[ \mathrm{Gini}(\rho, C) = \sum_{i\in [m]} \frac{ |C_i| }{|C|} G(C_i).\]

\subsubsection{Twoing rule}

The \texttt{CART} system also defines another measure known as the \textit{twoing rule}, which is a goodness measure only defined for binary predicates. It is based on transforming the problem of computing the impurity of a multi-class dataset into the task of computing the impurity of a \textit{two-class} dataset. This transformed problem is then solved with the Gini index defined above.

Let $l$ ($r$) be the number of examples on the left (right) side of the split and $n_{l,y}$ ($n_{r,y}$) be the number of examples with label $y$ on the left (right) side of the split. Then, the twoing rule is defined as follows~\cite[Ch.~4]{Breiman1984}:
\[
\mathrm{twoing}(\rho, C) = \frac{l}{|C|}\frac{r}{|C|}
\left( \sum_{y\in 2^A} \left\lvert \frac{n_{l,y}}{l} - \frac{n_{r,y}}{r}\right\rvert  \right)^2.
\]

Following~\cite{OC1}, the impurity measure we minimize is then simply the reciprocal of the twoing rule.

\subsubsection{Sum minority}

Probably the simplest way to calculate impurity is to count the number of misclassified instances if we were to assign the most frequent label in a partition to all of its data points. This impurity measure is called \textit{sum minority} and due to Heath et al.~\cite{Heath1993}.

Formally, for every partition $C_i$, let $|C_i|$ be the number of examples in that partition and $\gamma(C_i)$ be the label occurring most frequently in $C_i$. Then, define the \textit{minority} $\mu_i$ as
\[ \mu_i = | \{ (x,y)\in C_i : y \neq \gamma(C_i) \} |. \]
The sum minority impurity measure is then simply the sum of these minorities:
\[ \textrm{sum minority}(\rho, C) = \sum_{i \in [m]} \mu_i. \]

\subsubsection{Max minority}

A very similar impurity measure is \textit{max minority}~\cite{Heath1993}, defined as
\[ \textrm{max minority}(\rho, C) = \max_{i \in [m]} \mu_i. \]
It counts the number of misclassified instances in the ``worst'' partition with the maximum number of misclassifications.

Max minority has the theoretical advantage that it produces trees of depth at most $\log_2(|C|)$~\cite{OC1}. However, note that it is the \textit{depth} that is logarithmic in this expression -- the number of \textit{nodes} is linear in $|C|$. Thus, this theoretical insight is not very useful in practice.

\subsubsection{Area under the receiver-operator curve}
\label{def:auroc}
The final impurity measure we discuss has been developed specifically for DTs with linear classifiers in the context of controller representation~\cite{viktor-qest19}. The underlying idea is simple: we want to exploit the knowledge that the controller will be split with a hyperplane, obtained from a linear classifier or a different heuristic. The impurity measure tries to estimate how well separable the controller is by a hyperplane after having been split with the predicate $\rho$.

For now, let us consider the case of only two actions in a controller. In order to estimate how well separable a sub-controller is by a hyperplane, we can again train a linear classifier on this sub-controller and report a metric that measures some quality of this classifier. The simplest such quality would probably be the accuracy, i.e.\ the fraction of data points classified correctly. However, this has the disadvantage that even trivial classifiers that assign the same label to every data point can achieve high accuracy in the case of an imbalanced label distribution. Ashok et al.~\cite{viktor-qest19} instead suggest the usage of the \textit{area under the receiver-operator curve} (AUC), a well-known metric in statistics and machine learning that does not suffer from this drawback.

We thus proceed as follows to estimate the quality of a predicate $\rho$:
\begin{enumerate}
	\item Train a linear classifier for every sub-dataset $C_i$.
	\item Return the sum of the obtained AUC scores of the classifiers.
\end{enumerate}
Note that this is again a goodness measure, which we can transform into an impurity measure by considering the reciprocal.

Ashok et al.~\cite{viktor-qest19} use a different data representation. Instead of mapping states to sets of actions, they map state-action pairs to $\{0,1\}$, indicating whether an action is safe in a state or not. 
Hence they always deal with a binary classification problem. In order to utilize their idea with our different data representation, we again make use of the technique based on one-versus-the-rest classification, cf. \cite[Sec. 4.1.3]{dtcontrol}.
We train one linear classifier for every possible label, which tries to separate this label from the rest, and report a weighted average of the obtained AUC scores.

Note that this impurity measure has the practical disadvantage that we need to train several linear classifiers for every considered predicate, which can be very inefficient.
\subsection{Evaluation}\label{app:impEval}
\subsubsection{Models from probabilistic model checking}

\begin{table}[htbp]
	\centering
	\caption{Number of nodes obtained with different impurity measures and attribute value grouping on models from probabilistic model checking.}
	\label{tab:prism_impurity}
	\begin{tabular}{@{}lrrrrr@{}}
		\toprule
		Case study& Entropy      & Entropy ratio & Gini index   & Sum minority & Max minority \\ \midrule
		csma2\_4       & 48           & 89            & \textbf{43}  & 579          & 1,412         \\\addlinespace[0.2em]
		firewire\_abst & \textbf{9}   & 18            & 12           & 110          & 36           \\\addlinespace[0.2em]
		firewire\_impl & \textbf{77}  & 124           & \textbf{77}  & 442          & 126          \\\addlinespace[0.2em]
		leader4        & 154          & 207           & \textbf{150} & 547          & 1,767         \\\addlinespace[0.2em]
		mer30          & \textbf{158} & 213           & 165          & 7,557         & 6,797         \\\addlinespace[0.2em]
		wlan2          & 222          & 416           & \textbf{220} & 495          & 3,723         \\\addlinespace[0.2em]
		zeroconf       & \textbf{367} & 456           & 374          & 7,243         & 14,624        \\ \bottomrule
	\end{tabular}
\end{table}

Table~\ref{tab:prism_impurity} gives the results of attribute value grouping in combination with different impurity measures on some case studies from probabilistic model checking.
It clearly shows that probabilistic impurity measures such as entropy and gini index perform far better than non-probabilistic impurity measures like sum- and max-minority. Furthermore, we see that the entropy ratio is strictly worse than the standard entropy. As discussed in Section~\ref{sec:ratio}, this is expected, since the main reason for choosing the entropy ratio over just the entropy is normally to prevent overfitting. On the other hand, gini index and entropy perform similarly well and are both viable choices. Note that the twoing rule is not applicable in this scenario, as it is limited to binary predicates.

We also experimented with our modified version of the AUC impurity measure, introduced in Section~\ref{def:auroc}. As previously noted, one of its drawbacks is that it is very expensive to compute. Indeed, it took more than 13 minutes to build a tree with AUC for the small \texttt{firewire\_abst} controller -- in comparison to roughly $0.2$ seconds with the standard impurity measures. On the one hand, this is due to the fact that AUC requires the training of several linear classifiers for every considered predicate, which simply is computationally demanding. On the other hand, we notice that it also produces unnecessarily large trees, which in turn again increases the computational cost: we obtain 716 nodes in the tree for \texttt{firewire\_abst}.

Why does AUC not work in our scenario? There are two factors that come into play: first, the impurity measure tries to estimate the linear separability of the sub-datasets resulting from a predicate. However, since many features in the examples are categorical and we only use oblique splits with numeric features because of explainability reasons, the measure itself is not that meaningful in our context. Second, we conjecture that our approach based on one-versus-the-rest classification, which we adopted due to our data representation, just is not well-suited as an impurity measure. 

\subsubsection{Cyber-physical systems}

For cyber-phyiscal systems (CPS), our experiments suggest that entropy is one of the strongest impurity measures overall in the case of controllers obtained from CPS synthesis. To illustrate, we list the number of nodes when learning DTs with axis-aligned predicates, no determinization, and varying impurity measures in Table~\ref{tab:syn_impurity}.

The table clearly shows that sum- and max-minority perform far worse than the probabilistic impurity measures on many datasets and are overall not competitive. Entropy, gini index, and twoing rule usually perform similarly well, although entropy is slightly better in a number of cases. As expected, the entropy ratio is overall somewhat worse. We again encountered the same performance issues with AUC as before, and the numbers we could compute were not promising, which is why we did not include this impurity measure in the table.%

\begin{table}[htbp]
	\centering
	\caption{Effects of different impurity measures with axis-aligned predicates on decision tree sizes for synthesis of CPS. ``$\infty$'' indicates failure to produce a result within three hours.}
	\label{tab:syn_impurity}
	\setlength{\tabcolsep}{8pt}
	\begin{tabular}{@{}lrrrrrr@{}} 
		\toprule
		Case study    & Entropy         & \bigcell{r}{Entropy\\ratio}                 & \bigcell{r}{Gini\\index} & \bigcell{r}{Sum\\minority}       & \bigcell{r}{Max\\minority}     & \bigcell{r}{Twoing\\rule}                \\ 
		\midrule
		\multicolumn{3}{l}{\textbf{Single-output} }                                   &              &           &         &                       \\
		cartpole      & \textbf{253}    & 257                                         & 255          & 259       & 277     & \textbf{253}          \\\addlinespace[0.2em]
		tworooms      & \textbf{27}     & 37                                          & \textbf{27}  & 39        & 2,627   & \textbf{27}           \\\addlinespace[0.2em]
		helicopter    & \textbf{6,347} & 7,363                                       & 7,177        & 31,835    & 125,727 & 6,429                 \\\addlinespace[0.2em]
		cruise        & \textbf{987}    & 1,161                                       & 1,065        & 11,131    & 89,503  & 1,043                 \\\addlinespace[0.2em]
		dcdc          & \textbf{271}    & 391                                         & 275          & $\infty$  & 2,429   & 277                   \\\addlinespace[0.2em]
		\multicolumn{3}{l}{\textbf{Multi-output} }                                    &              &           &         & \multicolumn{1}{l}{}  \\
		tenrooms      & 17,297          & \textbf{15,951 }                            & 17,297       & 18,565    & 26,751  & 17,415                \\\addlinespace[0.2em]
		truck\_trailer & 338,389         & 348,959 &   \textbf{312,741}           &   442,013        &    561,083     &       316,457                \\\addlinespace[0.2em]
		traffic       & \textbf{12,573}          &       $\infty$                                      & 16,627             &     276,067      & $\infty$        &   15,319                    \\\addlinespace[0.2em]
		vehicle       & 13,237          & 15,677                                      & 13,135       & 32,271    & 39,129  & \textbf{13,109}       \\\addlinespace[0.2em]
		aircraft      & \textbf{913,857}         &      932,625                                       &   923,709           &   $\infty$        &     2,242,773    &      922,727                 \\
		\bottomrule
	\end{tabular}
\end{table}

\section{More information on the better determinization}\label{app:MLE}

Here we give more information on the new insights about determinization.
First we give the proof that it suffices to consider complete determinization functions. 
Then we give the derivation of multi-label entropy and afterwards multi-label Gini index.
Lastly, we give another derivation of multi-label entropy and Gini-index, to provide more intuition and insight to their workings.
Parts of this appendix are almost verbatim from the Bachelor's thesis~\cite{MJ}.

\subsection{Considering complete determinization functions suffices}\label{app:completeDet}
We show that we can reduce the search space by only considering complete determinization functions, i.e. determinization functions such that for all states $s$ we have $\abs{\detfun(C)(s)} = 1$.
In particular, we prove that there always exists a complete determinization with minimal impurity in Proposition~\ref{prop:completeness} and Theorem~\ref{theo:completeness}. We limit our discussion to the entropy and the Gini index, since these impurity measures are the most widely used and performed the best in our experiments.
We shorten the terminology and in the following refer to determinization functions as determinizations.

\begin{proposition}
	\label{prop:completeness}
	Given an impurity measure $\phi\in\{\mathrm{ent}, \mathrm{Gini}\}$, a predicate $\rho$, and a controller $D$, for every incomplete determinization $\delta$ of $D$ there exists a complete determinization $\bar{\delta}$ of $D$ with $\phi(\rho, \bar\delta, D) \leq \phi(\rho, \delta, D)$.
\end{proposition}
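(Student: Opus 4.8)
The plan is to reduce the statement to an elementary monotonicity property of the two impurity measures under \emph{merging label classes}, and then to exhibit $\bar\delta$ explicitly. First I would observe that the predicate $\rho$ partitions $D$ into parts $D_1,\dots,D_m$ according to the \emph{states} only, so this partition of the state set is the same no matter which determinization is applied. Consequently $\phi(\rho,\delta,D)=\sum_{i=1}^{m}\tfrac{\abs{D_i}}{\abs{D}}\,\phi^{\flat}(\delta(D)|_{D_i})$, where $\phi^{\flat}$ is the within-node impurity ($H$ if $\phi=\mathrm{ent}$, $G$ if $\phi=\mathrm{Gini}$, as in Appendix~\ref{app:impurities}), and the weights $\abs{D_i}/\abs{D}$ do not depend on $\delta$. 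Hence it suffices to produce a complete $\bar\delta$ with $\phi^{\flat}(\bar\delta(D)|_{D_i})\le\phi^{\flat}(\delta(D)|_{D_i})$ for every $i$.

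Next I would construct $\bar\delta$ directly: fix any choice function that assigns to every nonempty $B\subseteq A$ some action $a(B)\in B$, and set $\bar\delta(s):=\{a(\delta(s))\}$. Then $\bar\delta$ is a valid determinization ($\emptyset\neq\bar\delta(s)\subseteq\delta(s)\subseteq C(s)$) and is complete by construction. The key structural observation is that, inside each part $D_i$, any two states carrying the same $\delta$-label receive the same $\bar\delta$-label; therefore the partition of $D_i$ into $\bar\delta$-label classes is a \emph{coarsening} of its partition into $\delta$-label classes, obtained by merging blocks. It then remains to show that $H$ and $G$ never increase when two blocks of a labelled set of size $N$ are merged. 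Writing $p,q$ for the relative frequencies of the two merged classes, for entropy one needs $(p+q)\log_2(p+q)\ge p\log_2 p+q\log_2 q$, which holds because $\log_2$ is increasing and $p+q\ge p$, $p+q\ge q$ (with $0\log_2 0:=0$); for the Gini index one needs $(p+q)^2\ge p^2+q^2$, which is immediate. Since any coarsening is a finite sequence of pairwise merges, $\phi^{\flat}(\bar\delta(D)|_{D_i})\le\phi^{\flat}(\delta(D)|_{D_i})$ for each $i$, and summing with the fixed weights gives $\phi(\rho,\bar\delta,D)\le\phi(\rho,\delta,D)$.

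I expect the main obstacle to be conceptual rather than computational: resisting the tempting but delicate ``exchange'' argument that completes $\delta$ one state at a time, since moving a single state from its label to a singleton subset can genuinely increase the impurity (e.g.\ pulling a state out of a large majority class). The block-merge viewpoint sidesteps this entirely, because it only ever \emph{unions} classes and never fragments them. The remaining work is bookkeeping: carefully arguing that the single relabelling $B\mapsto\{a(B)\}$ induces, simultaneously in every part $D_i$, a coarsening of the label partition, and handling the degenerate case where a block is absorbed into a previously empty singleton class (trivial, as $\psi(0)=0$ for $\psi(n)=-n\log_2(n/N)$ and likewise for $n\mapsto (n/N)^2$). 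The two displayed inequalities for $H$ and $G$ are one-line verifications.
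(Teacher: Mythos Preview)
Your proposal is correct and follows essentially the same approach as the paper. Both arguments rest on the identical elementary lemma that merging two label classes cannot increase $H$ or $G$ within a fixed sub-dataset; the only cosmetic difference is that the paper eliminates one non-singleton label $q$ at a time (reassigning every state with $\delta$-label $q$ to some $\{r\}$ with $r\in q$, then iterating), whereas you construct the complete $\bar\delta$ in one shot via a choice function and then decompose the resulting coarsening into a finite sequence of pairwise merges.
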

\begin{proof}
	We outline a procedure that turns an incomplete determinization into a complete determinization with at most the same impurity. Let $\delta$ be an incomplete determinization of $D$ with co-domain $\im$. Furthermore, let $n_{i,y}$ denote the number of data points that are assigned the (possibly non-deterministic) label $y\in\im$ under $\delta$ in the $i^{th}$ sub-dataset $D_i$ created by $\rho$.
	
	Let us start with $\phi = \mathrm{ent}$. We have that
	\[ \mathrm{ent}(\rho, \delta, D) = \sum_{i\in [m]} \frac{ |D_i| }{|D|} H(\delta, D_i),\]
	where
	\begin{gather*}
	H(\delta, D_i) = -\sum_{y\in \im}\frac{n_{i,y}}{|D_i|}\log_2\left(\frac{n_{i,y}}{|D_i|}\right).
	\end{gather*}
	
	Since $\delta$ is incomplete, there is a label $q\in \im$ with $|q| > 1$. Consider the determinization $\bar{\delta}$ that is equivalent to $\delta$, except that it assigns the single-label $r\in q$ to all data points $x\in X$ where $\delta(x) = q$. We define $\bar{n}_{i,y}$ for $\bar{\delta}$ equivalently to $n_{i,y}$ for $\delta$. We fix a specific sub-dataset $D_i$ and drop the corresponding index to simplify notation. Then,
	\begin{align*}
	H\left(\bar{\delta}\right) &= -\sum_{y\in \imbar}\frac{\bar{n}_y}{|D|}\log_2\left(\frac{\bar{n}_y}{|D|}\right)\\
	&= -\sum_{y\in \im \setminus\{q,\, r\}}\frac{n_y}{|D|}\log_2\left(\frac{n_y}{|D|}\right) - \frac{\bar{n}_r}{|D|}\log_2\left(\frac{\bar{n}_r}{|D|}\right).
	\end{align*} 
	It follows that
	\begin{gather*}
	H(\delta) - H\left(\bar{\delta}\right)\\
	= -\frac{n_q}{|D|}\log_2\left(\frac{n_q}{|D|}\right) - \frac{n_{r}}{|D|}\log_2\left(\frac{n_{r}}{|D|}\right) + \frac{\bar{n}_r}{|D|}\log_2\left(\frac{\bar{n}_r}{|D|}\right).
	\end{gather*}
	
	With
	\[\bar{n}_r = n_r + n_q,\]
	we obtain:
	\begin{gather*}
	H(\delta) - H\left(\bar{\delta}\right)\\
	= -\frac{n_q}{|D|}\log_2\left(\frac{n_q}{|D|}\right) - \frac{n_{r}}{|D|}\log_2\left(\frac{n_{r}}{|D|}\right) +
	\frac{n_r + n_q}{|D|}\log_2\left(\frac{n_r + n_q}{|D|}\right).
	\end{gather*}
	First, consider the special case of $n_r = 0$. As is usual in information theory, we evaluate $0\log_2(0)$ as $\lim_{x\rightarrow0}x\log_2(x) = 0$, and thus arrive at
	\begin{gather*}
	H(\delta) - H\left(\bar{\delta}\right) = 0.
	\end{gather*}
	If $n_r > 0$, we have
	\begin{gather*}
	H(\delta) - H\left(\bar{\delta}\right)\\
	= \frac{n_q}{|D|}\left(\log_2\left(\frac{n_r + n_q}{|D|}\right)-\log_2\left(\frac{n_q}{|D|}\right)\right) + \frac{n_r}{|D|}\left(\log_2\left(\frac{n_r + n_q}{|D|}\right)-\log_2\left(\frac{n_r}{|D|}\right)\right)\\
	> 0,
	\end{gather*}
	where the last step follows from the fact that $\log_2$ is strictly increasing.
	
	Thus, for every sub-dataset $D_i$, we have $H\left(\bar{\delta}, D_i\right) \leq H(\delta, D_i)$, and consequently
	\[ \mathrm{ent}(\rho, \bar{\delta}, D) \leq  \mathrm{ent}(\rho, \delta, D).\]
	
	Note the following key point: $\bar{\delta}$ is ``more deterministic'' than $\delta$ as there are fewer states to which it assigns a label $y$ with $|y| > 1$. If we thus continue this process of producing ``more deterministic'' determinizations (now starting with $\bar{\delta}$), we will eventually reach a complete determinization with an entropy less than or equal to the entropy of $\delta$.
	
	A similar analysis can be conducted for the case of $\phi = \mathrm{Gini}$. With the same definitions as above, we obtain
	\[
	G\left(\bar\delta\right) = 1 - \sum_{y\in \im \setminus \{ q,\,r \}} \left(\frac{n_y}{|D|}\right)^2 - \left(\frac{\bar{n}_r}{|D|}\right)^2.
	\]
	Then,
	\begin{gather*}
	G(\delta) - G\left(\bar{\delta}\right)\\
	= -\left(\frac{n_q}{|D|}\right)^2 -\left(\frac{n_r}{|D|}\right)^2
	+\left(\frac{n_r + n_q}{|D|}\right)^2\\
	= -\frac{n_q^2}{{|D|}^2} -\frac{n_r^2}{{|D|}^2}
	+\frac{n_r^2 + 2n_rn_q + n_q^2}{{|D|}^2}\\
	= \frac{2n_rn_q}{{|D|}^2}\\
	\geq 0.
	\end{gather*}
	
	Therefore, similar as above, we have
	\[\mathrm{Gini}(\rho, \bar\delta, D) \leq \mathrm{Gini}(\rho, \delta, D)\]
	and can continue this process to eventually reach a complete determinization with Gini index less than or equal to the Gini index of $\delta$.
\end{proof}

\begin{theorem}
	\label{theo:completeness}
	Let $\Delta^*$ be the set of determinizations that achieve the minimal impurity with respect to an impurity measure $\phi \in \{ \mathrm{ent},\mathrm{Gini} \}$, a predicate $\rho$, and a dataset $D$. Then, there exists a $\delta^* \in \Delta^*$ that is complete.
\end{theorem}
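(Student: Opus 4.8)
The plan is to obtain this theorem almost immediately from Proposition~\ref{prop:completeness}, which already carries out the substantive work. First I would note that $\Delta^*$ is nonempty and that ``the minimal impurity'' is well-defined: the controller $D$ has finitely many states, and each state $s$ admits only finitely many nonempty subsets of $C(s)$ as its image under a determinization, so there are only finitely many determinizations of $D$ altogether. Hence $\phi(\rho, \cdot, D)$ attains a minimum over this finite set, and $\Delta^*$ collects exactly the minimizers.

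Next I would pick an arbitrary $\delta^* \in \Delta^*$. If $\delta^*$ is already complete, there is nothing to prove. Otherwise, Proposition~\ref{prop:completeness} supplies a complete determinization $\bar\delta$ of $D$ with $\phi(\rho, \bar\delta, D) \leq \phi(\rho, \delta^*, D)$. Since $\delta^*$ attains the minimal impurity, we also have $\phi(\rho, \bar\delta, D) \geq \phi(\rho, \delta^*, D)$, so the two values are equal. Therefore $\bar\delta \in \Delta^*$ and $\bar\delta$ is complete, which is exactly the assertion.

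I do not expect a genuine obstacle at the level of the theorem: the entire difficulty has been front-loaded into Proposition~\ref{prop:completeness}, namely (i) that merging a non-singleton label $q$ into a singleton $r \in q$ never increases $\mathrm{ent}$ or $\mathrm{Gini}$ (the case analyses on $n_r = 0$ versus $n_r > 0$, using monotonicity of $\log_2$, respectively the algebraic identity for the Gini index), and (ii) that iterating this operation terminates, because each application strictly decreases the number of states assigned a non-singleton label. The only mild care needed here is the finiteness remark guaranteeing $\Delta^*$ is nonempty so that ``minimal impurity'' makes sense; were one to allow infinite state spaces, the same argument would instead show that the infimum of $\phi(\rho, \cdot, D)$ over complete determinizations equals the infimum over all determinizations, with Proposition~\ref{prop:completeness} providing the nontrivial inequality.
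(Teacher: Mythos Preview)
Your proof is correct and follows essentially the same approach as the paper: both derive the theorem directly from Proposition~\ref{prop:completeness}. The paper phrases it as a proof by contradiction (assume all $\delta^*\in\Delta^*$ are incomplete and reach a contradiction), whereas you give the equivalent direct argument; you also add the finiteness remark ensuring $\Delta^*$ is nonempty, which the paper leaves implicit.
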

\begin{proof}
	We give an indirect proof. Assume every determinization $\delta^* \in \Delta^*$ is incomplete. Then, by Proposition~\ref{prop:completeness}, we know that there exists a $\bar\delta^*$ that is complete and $\phi(\rho, \bar\delta^*, D) \leq \phi(\rho, \delta^*, D)$ for every $\delta^* \in \Delta^*$. However, this means that $\bar\delta^*$ achieves the minimal impurity and would have to be an element of $\Delta^*$. Therefore, $\Delta^*$ cannot be the set of determinizations that achieve minimal impurity.
\end{proof}

Theorem~\ref{theo:completeness} shows that it suffices to consider all complete determinizations to determine the best predicate. However, the number of complete determinizations is still far too large to simply enumerate them all. 

\subsection{Multi-label entropy derivation}

We start our derivation with the following general formulation for multi-label entropy of a controller $C$ and a determinization $\detfun$. For a controller $C$, $\image(C)$ denotes the codomain and $\abs{C}$ denotes the size of the domain, and $p_y$ is the empirical frequency of $y$ in $\detfun(C)$, formally $p_y = \abs{\{s \in \detfun(C) \mid \detfun(C)(s) = y\}}$:

\[H(C,\detfun) = - \sum_{y \in \image(\detfun(C))} \frac{p_y}{\abs{C}} \log_2(\frac{p_y}{\abs{C}})\]

If $\detfun$ is the identity-function, the considered co-domain is $2^A$, and we have the classic entropy. 
For any other determinization function, we get different $p_y$ and thus different impurities.
As mentioned earlier, trying every possible determinization function $\detfun$ and calculating the precise $p_y$ is optimal, but infeasible.
We showed in Theorem~\ref{theo:completeness} that for calculating impurities it suffices to consider determinization functions that map to singleton sets, i.e. for all states $s$ we have $\abs{\detfun(C)(s)} = 1$.
Then $y$ is a singleton set $\{a\}$ for some action $a \in A$, and $p_y$ is the frequency of $\{a\}$ in the determinized controller. 
Thus, we can over-approximate $p_y$ by counting the occurrences of every action $a$ in the controller, which is feasible and was already done for MaxFreq. 
The difference is that MaxFreq used this to explicitly calculate one fixed determinization function $\detfun$ that was used for all predicates, while the new approach uses it to over-approximate the multi-label impurity, implicitly using a different determinization function for every considered predicate\footnote{Note that we formulate the multi-label impurity for a single sub-controller. As for all previous impurity measures, the impurity of a predicate then is the weighted average of the impurities of the sub-controllers. Thus, since the determinization comes into play after the split in every sub-controller, a different determinization is considered for every predicate.}.

To complete the formulation of our heuristic, we add that in the corner case that all states agree on an action, the entropy should be 0, and thus get the following impurity measure, where for an action $a\in A$ we use $p_a$ to denote the frequency of that action in the un-determinized controller, i.e. $p_a =  \abs{\{s \in (C) \mid a \in (C)(s)\}}$:
\[\mle(C) = \begin{cases}
0 &\mbox{if } \exists a \in A, \forall s \in S: a \in C(s)\\
- \sum_{a \in A} \frac{p_a}{\abs{C}} \log_2(\frac{p_a}{\abs{C}}) &\mbox{otherwise}
\end{cases}\]

\subsection{Multi-label Gini index derivation}\label{app:MLGini}

We proceed with the multi-label formulation of the Gini index, which can be derived similar to multi-label entropy. Applying Theorem~\ref{theo:completeness}, we obtain that
\begin{equation}
\label{eq:mlGini}
G(\delta_\rho^*, C) = 1 - \sum_{l\in L} \left(\frac{n_{i,l}}{|C|}\right)^2.
\end{equation}
We can again estimate the $n_{i,l}$ with the approximation of maximum frequency as in the entropy derivation. However, we need to be careful since we over-approximate and the value of the sum in Eq.~\ref{eq:mlGini} can therefore be greater than 1. In order to keep the impurity non-negative, we thus need to subtract the sum not from 1, but from its maximal value $|L|$. Finally, again treating the corner case of all labels agreeing, we get
\[
\widehat{G}(C) =
\begin{dcases}
0,& \text{if } \exists l\in L: f_i(l) = |C|\\
|L|-\sum_{l\in L}\left(\frac{f_i(l)}{|C|}\right)^2
,& \text{otherwise}.
\end{dcases}
\]
The complete multi-label Gini index is then again the weighted average of the estimated values $\widehat G(C)$.

\subsection{An alternative point of view}\label{app:MLintuition}
Having derived multi-label impurity measures formally, we now want to point out an alternative, more intuitive point of view that may shed some light on their internal workings. Let us fix a specific sub-controller $D_i$ with frequency function $f_i$. Plotting $f_i / |D_i|$, i.e.\ the fraction of data points that can be assigned a specific single-label, yields a bar chart that may look like the one depicted in Fig.~\ref{fig:barchart}~(left).

\begin{figure}
	\centering
	\subfloat[]{\includegraphics[width=0.4 \textwidth]{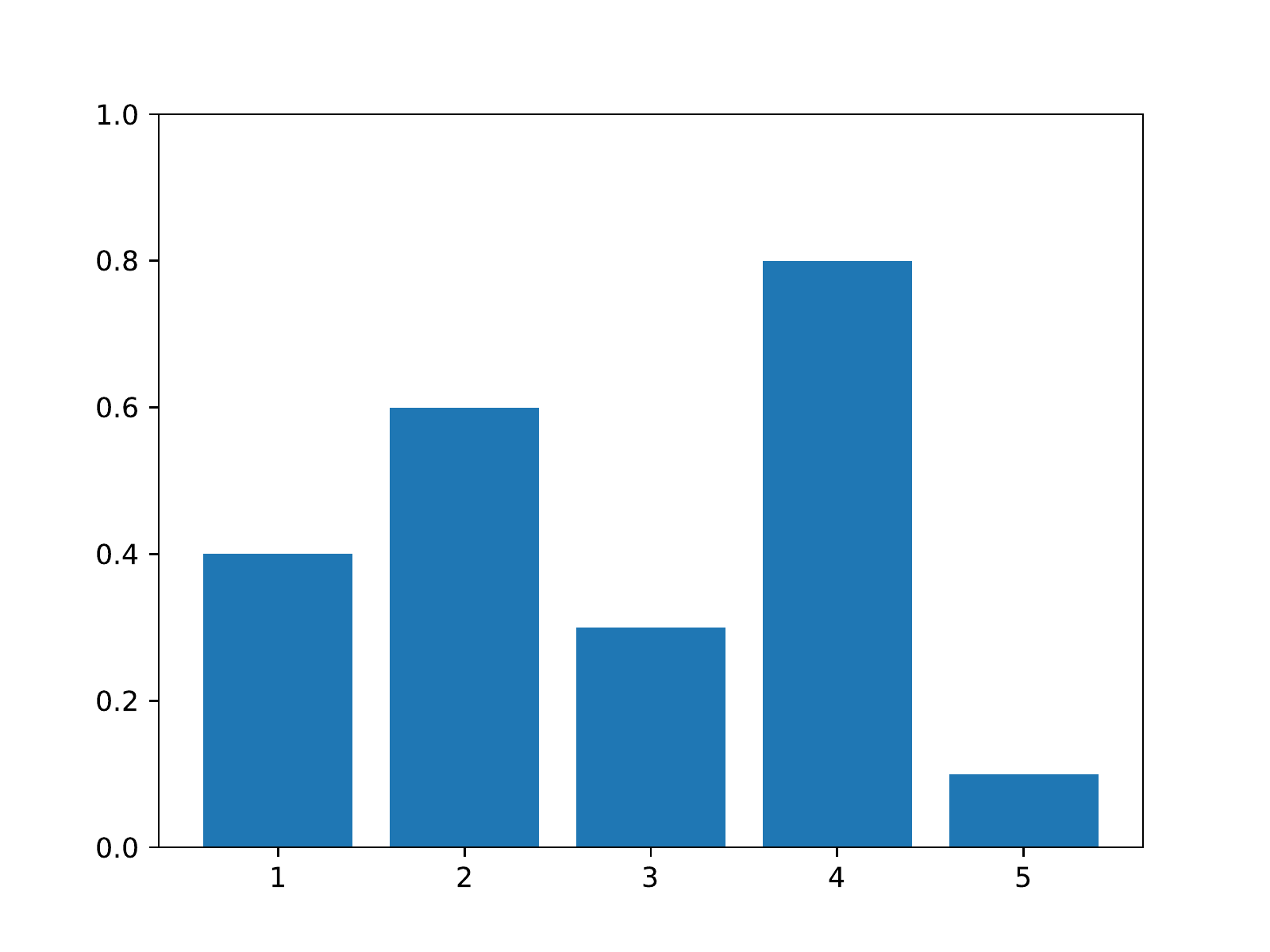}}
	\qquad
	\subfloat[]{\includegraphics[width=0.4\textwidth]{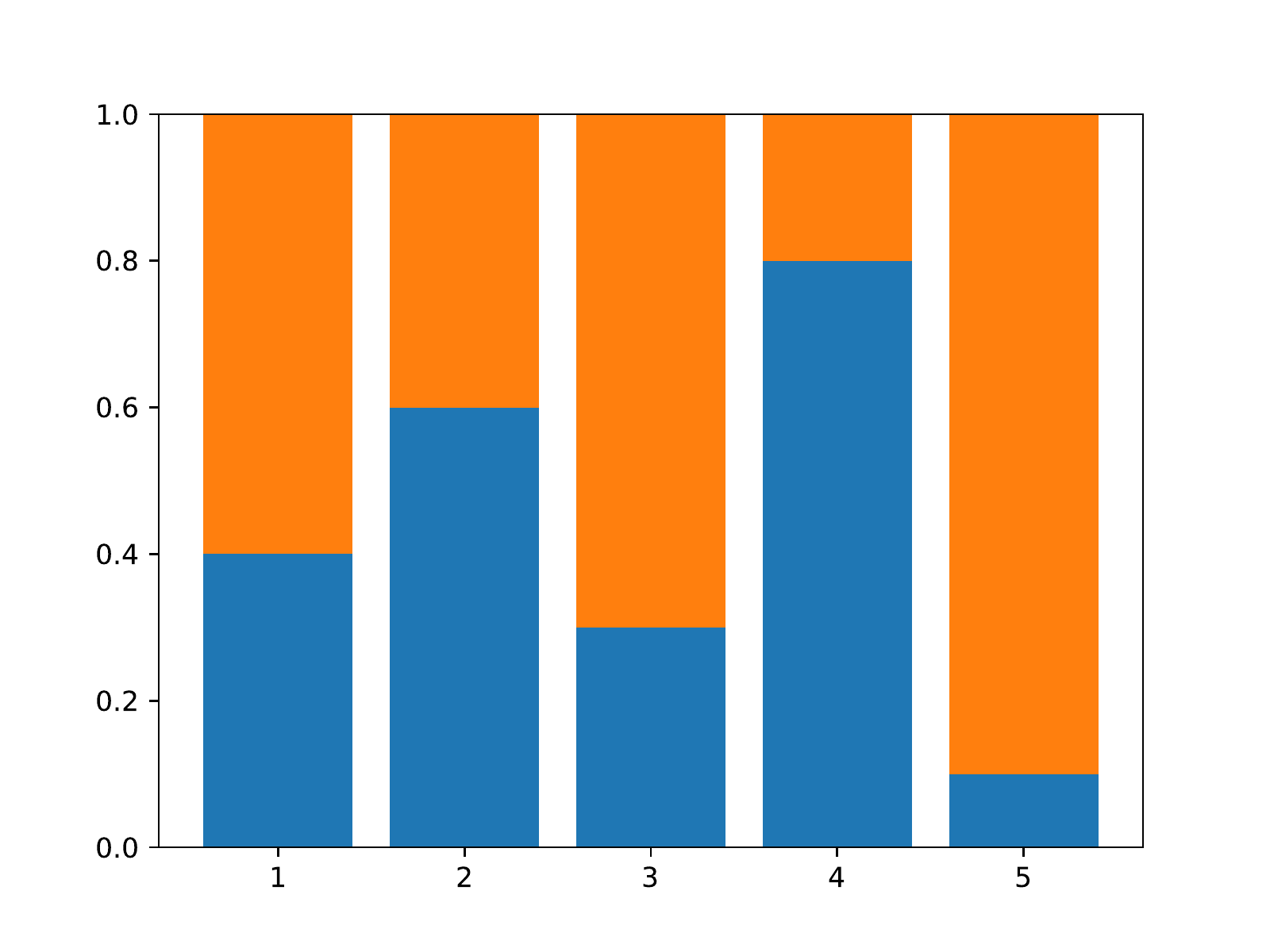}}
	\caption[Bar chart of the frequency function]{Bar chart of the frequency function. The bar chart (left) for a controller with actions $1, \ldots, 5$ and the error bars indicating the impurity (right).}
	\label{fig:barchart}
\end{figure}

If we now had to construct an impurity measure solely from this bar chart, we might make the following observations:
\begin{enumerate}
	\item The impurity should be 0 if all bars have a value of 1, since then every label can be assigned to every data point.
	\item The impurity should be high if there are many bars with low values.
	\item Generally, if there are fewer bars the impurity should be lower, because if there are fewer labels we will probably need fewer splits of the dataset.
\end{enumerate}

This already rules out two simple ideas that come to mind: we cannot use the reciprocal of the sum of the bars, because this would violate point 2 if there is a great number of different labels. We also cannot use the reciprocal of the mean of the bars, since this would not take observation 3 into account. Instead, we could come up with the following impurity measure that satisfies all three desired properties: we measure how much is missing from each bar to get a value of 1 and return the sum of these values. This idea is depicted in Fig.~\ref{fig:barchart}~(right).

Formalizing this concept yields the following function to measure the impurity of a sub-dataset:
\begin{align}
J(D_i) &= \sum_{l\in 2^A} 1 - \frac{f_i(l)}{|D_i|}\\
&= |L| - \sum_{l\in 2^A} \frac{f_i(l)}{|D_i|} \label{eq:bin}.
\end{align}
We could then compute the impurity of a predicate as the weighted average of the values $J(D_i)$ for every sub-dataset $D_i$.

Eq.~\ref{eq:bin} already looks surprisingly similar to the function $\widehat G(D_i)$ of the multi-label Gini index. Indeed, we see that $\widehat G(D_i)$ is merely a scaled version of $J(D_i)$: before computing the error bars, the bar chart is scaled with the function $s(x) = x^2$, which penalizes smaller bars more strongly.

With the same idea we can also work towards the multi-label entropy. Consider the scaling function $s(x) = 1 + x\log_2(x)$. We have
\begin{align*}
J(D_i, s) &= \sum_{l\in L} 1 - s\left(\frac{f_i(l)}{|D_i|}\right)\\
&= \sum_{l\in L} 1 - 1 - \frac{f_i(l)}{|D_i|}\log_2\left(\frac{f_i(l)}{|D_i|}\right)\\
&= -\sum_{l\in L}\frac{f_i(l)}{|D_i|}\log_2\left(\frac{f_i(l)}{|D_i|}\right),
\end{align*}
which matches the function $\widehat H(D_i)$ of the multi-label entropy.

The scaling functions of the multi-label entropy and Gini index are plotted in Fig.~\ref{fig:scaling}. As previously mentioned, the Gini index scaling function especially increases the impurity assigned to small bars. In contrast, the entropy scaling function penalizes bars with a value of approximately 0.37 the most heavily and assigns small impurity to bars with very low value. While not quite as intuitive at first glance, this also seems like a valid approach: small bars mean that only few data points can be assigned a particular label and we thus only have to separate those few data points. On the other hand, a label that can be assigned to around 40 percent of the examples means that we might have to split off a large fraction of the dataset.

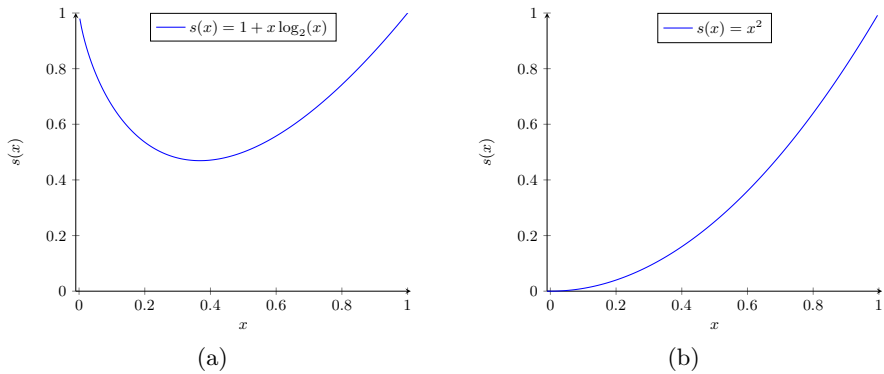
\begin{figure}
	\centering
	\subfloat[]{\begin{tikzpicture}[scale=.65]
	\begin{axis}[axis lines = left,xlabel = $x$, ylabel = {$s(x)$}, ymin=0, ymax=1, xmin=-.01, xmax=1.01, legend style={at={(0.5,1)},anchor=north}]
		\addplot [samples=2000,color=blue, style=semithick]
		{1+x*log2(x)};
		\addlegendentry{$s(x) = 1+x\log_2(x)$}
	\end{axis}
\end{tikzpicture}}
	\qquad
	\subfloat[]{\begin{tikzpicture}[scale=.65]
	\begin{axis}[axis lines = left,xlabel = $x$, ylabel = {$s(x)$}, ymin=0, ymax=1, xmin=-.01, xmax=1.01, restrict y to domain=0:1, legend style={at={(0.5,1)},anchor=north}]
		\addplot [samples=1000,color=blue, style=semithick]
		{x^2};
		\addlegendentry{$s(x) = x^2$}
	\end{axis}
\end{tikzpicture}}
	\caption[Scaling functions used in multi-label impurity measures]{Plots of the scaling functions arising out of multi-label entropy (left) and Gini index (right).}
	\label{fig:scaling}
\end{figure}

\clearpage

\section{Interactive Interface Screenshot}\label{app:interScreenshot}

\begin{figure}[H]
	\makebox[\textwidth]{\includegraphics[trim={0 14cm 0 0},clip,width=0.75\paperwidth]{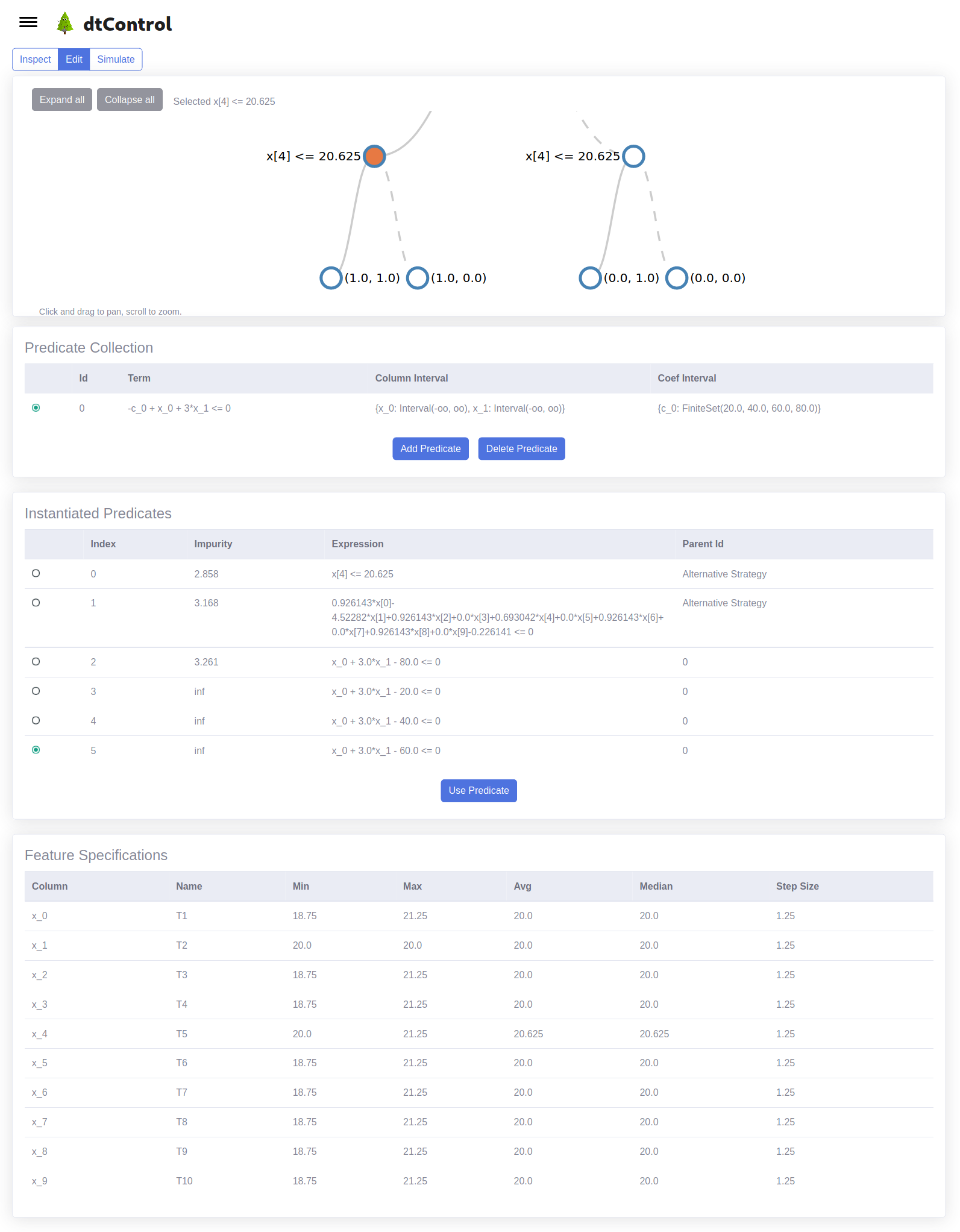}}
	\caption{The interactive interface for predicate selection.}
	\label{fig:interactive}
\end{figure}
	}
	{\clearpage}
\vfill

{\small\medskip\noindent\textbf{Open Access} This chapter is licensed under the terms of the Creative Commons\break Attribution 4.0 International License(\url{https://creativecommons.org/licenses/by/4.0/}), which permits use, sharing, adaptation, distribution and reproduction in any medium or format, as long as you give appropriate credit to the original author(s) and the source, provide a link to the Creative Commons license and indicate if changes were made.}
		
		{\small \spaceskip .28em plus .1em minus .1em The images or other third party material in this chapter are included in the chapter's Creative Commons license, unless indicated otherwise in a credit line to the material.~If material is not included in the chapter's Creative Commons license and your intended\break use is not permitted by statutory regulation or exceeds the permitted use, you will need to obtain permission directly from the copyright holder.}
		
		\medskip\noindent\includegraphics{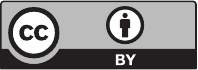}
	
\end{document}